%% file: main.tex
\documentclass{article} 
\PassOptionsToPackage{sort&compress}{natbib}
\usepackage{iclr2026_conference,times}
\setcitestyle{numbers,square,citesep={,}}

\input{math_commands.tex}

\usepackage[hidelinks]{hyperref}
\usepackage{url}
\usepackage{amsmath}
\usepackage{amssymb}
\usepackage{amsthm}
\usepackage{booktabs}
\usepackage{array}
\usepackage{multirow}
\usepackage{placeins} 
\usepackage{graphicx}
\graphicspath{{figures/}}
\usepackage{enumitem}
\usepackage{tikz}

\theoremstyle{plain}
\newtheorem{theorem}{Theorem}
\newtheorem{proposition}{Proposition}
\newtheorem{lemma}{Lemma}
\newtheorem{corollary}{Corollary}
\theoremstyle{definition}
\newtheorem{definition}{Definition}
\newtheorem{hypothesis}{Hypothesis}
\theoremstyle{remark}
\newtheorem{remark}{Remark}

\providecommand{\dd}{\mathrm{d}}
\providecommand{\eexp}{\mathrm{e}}
\providecommand{\smin}{\sigma_{\min}}
\providecommand{\smax}{\sigma_{\max}}
\providecommand{\Dc}{D_{\mathrm{c}}}
\providecommand{\Du}{D_{\mathrm{u}}}
\providecommand{\Dw}{D_{w}}
\providecommand{\etac}{\eta_{\mathrm{c}}}
\providecommand{\etau}{\eta_{\mathrm{u}}}
\providecommand{\etaw}{\eta_{w}}
\providecommand{\verify}[1]{} 

\def\eqref#1{(\ref{#1})}
\def\Eqref#1{Equation~(\ref{#1})}

\newif\ifarxiv
\arxivfalse

\makeatletter
\providecommand{\input@path}{}
\g@addto@macro\input@path{{../}}
\makeatother

\title{Guidance Breaks the Fitted Operator:\\
A Terminal-Fitted Repair for Classifier-Free Guidance}

\author{Shiheng Zhang \\
University of Washington \\
\texttt{shzhang3@uw.edu}}

\iclrfinalcopy

\begin{document}

\pagestyle{plain}

\maketitle

\input{sections/00_abstract.tex}
\input{sections/01_introduction.tex}

\input{guided_sections_2_3.tex}
\input{guided_empirical_certificates.tex}

\input{sections/045_related_work.tex}

\input{sections/05_discussion.tex}

\input{sections/06_reproducibility.tex}

\bibliographystyle{iclr2026_conference}
\bibliography{references}

\input{sections/99_appendix.tex}

\end{document}

%% file: math_commands.tex
\usepackage{amsmath,amsfonts,bm}

\def\eqref#1{equation~\ref{#1}}
\def\Eqref#1{Equation~\ref{#1}}

\def\1{\bm{1}}

\DeclareMathAlphabet{\mathsfit}{\encodingdefault}{\sfdefault}{m}{sl}
\SetMathAlphabet{\mathsfit}{bold}{\encodingdefault}{\sfdefault}{bx}{n}

\newcommand{\E}{\mathbb{E}}

\newcommand{\R}{\mathbb{R}}



%% file: sections/00_abstract.tex
\begin{abstract}
Classifier-free guidance (CFG) is the standard way to strengthen
class-conditioning in diffusion and flow-matching samplers, yet at large
guidance it oversaturates and destabilizes---symptoms practitioners suppress
with more steps or limited-interval schedules. We analyze CFG through an
asymptotic-preserving, numerical-analysis lens. Building on a
recent result \citep{ap2026} that the deterministic DDIM step is the unique
\emph{fitted} operator for the unguided terminal layer---exact on the final,
small-$\sigma$ stretch of sampling---we show
that guidance re-stiffens \emph{exactly} the discriminative subspace to an
anomalous exponent $1+w$ (guided coordinates contract like $\sigma^{1+w}$
rather than $\sigma$). DDIM is therefore no longer fitted there, and on coarse
meshes its guided residual diverges as $\smin\to0$. We prove a guided clock barrier with three ordered step-size
thresholds, and read one-step oversaturation as its endpoint---a solver artifact
on the calibration model rather than the continuous guided law. The same analysis
yields a one-coefficient, zero-extra-NFE repair: replace CFG's $w(r-1)$ by
$r^{1+w}-r$ on the guidance direction. This coefficient is the unique
spectrum-free terminal-exact one, and it preserves the sign of every analyzed
coordinate. On the calibration model's discriminative
crossover it removes CFG's $\smin$-divergent blow-up and is first-order accurate
against the exact guided flow as $\smin\to0$---asymptotic-preserving in $\smin$,
though not uniform in $w$. On learned CIFAR-10 checkpoints---and, as a cross-domain smoke test, on
Stable Diffusion~1.5 DDIM---it acts as a high-guidance stabilizer at no extra
cost rather than a universal quality knob: it cuts residual amplification and
saturation and gives $9/9$ point-FID wins over CFG on the tested grid, while in
the hard-cell blocks its classifier-proxy target accuracy stays close to CFG and
terminal guidance shutdown loses much more. We report the limits alongside: it
is not a universal image-quality win (KID can favor CFG; an interval can win FID
in some cells), and against a dense vanilla-CFG reference it is not a uniformly
better integrator of that field.
\end{abstract}

%% file: sections/01_introduction.tex
\section{Introduction}
\label{sec:intro}

Diffusion and flow-matching samplers integrate a probability-flow ODE whose
velocity field stiffens as the noise scale $\sigma\to\smin$: on a coordinate
normal to the data manifold the exact flow contracts at exponent one, by
$\sigma_{n+1}/\sigma_n$ per step. A recent fitted-operator analysis of
unguided samplers~\citep{ap2026} shows that this terminal
layer admits a \emph{unique} fitted (layer-exact) one-step operator: a
frozen-field Euler step---Euler with the velocity held at its start-of-step
value---reproduces that exact contraction if and only if its integration
variable is affine in $\sigma$. This singles out the $\sigma$-clock step,
algebraically identical to deterministic DDIM \citep{song2021ddim}, and makes
it \emph{asymptotic-preserving} (AP) on the exactly solvable calibration
models studied there~\citep{ap2026}---uniformly accurate as $\smin\to0$. This paper asks
one question of that lens: \emph{what survives of DDIM's fitted-operator
protection once we turn on guidance?}

\paragraph{Guidance re-stiffens the discriminative subspace.}
Classifier-free guidance (CFG) \citep{hosalimans2022cfg} runs the sampler on an
extrapolated denoiser $\Dw=(1+w)\Dc-w\Du$, that is, on a second velocity field
$\etaw=\etac+w(\etac-\etau)$ (the $\eta$'s are noise-prediction fields;
Section~\ref{sec:guided-flow}) with guidance scale $g=1+w$. This is not a uniform
stiffening. On the commuting Gaussian model of
Section~\ref{sec:guided-flow}, the guided terminal exponent
$\mu_w(\sigma)$ splits by layer (a \emph{layer} here is a regime of
state-space directions, not a network layer; Figure~\ref{fig:layers}): it
stays exactly one
on directions shared by the class and the marginal, freezes to zero on
tangential directions, and rises to $1+w$ on the \emph{discriminative}
directions---the subspace that separates the class from the marginal
(Proposition~\ref{prop:glayers}). Guidance re-stiffens precisely this
discriminative subspace, and it does so with a \emph{second} singular
parameter: $w$ enters the terminal structure on the same footing as $\smin$.
Because $\Dw$ is not any distribution's posterior mean, DDIM's fitted-operator
protection lapses there: on coarse meshes, vanilla DDIM$+$CFG leaves the
class of fitted operators and its terminal residual diverges as $\smin\to0$.

\begin{figure}[t]
\centering
\begin{tikzpicture}[scale=0.72,>=stealth]
  \draw[->] (0,0) -- (8.8,0);
  \draw[->] (0,0) -- (0,3.5) node[left]{\small $\mu_w(\sigma)$};
  \node[below] at (7.1,-0.10) {\small $\lambda=-\log\sigma$\;\;(terminal $\rightarrow$)};
  \draw[dashed,gray] (0,3) -- (8.5,3);
  \node[right] at (8.5,3) {\small $1+w$};
  \node[right] at (8.5,1) {\small $1$};
  \draw[dotted] (4.1,0) -- (4.1,2.0);
  \node[fill=white,inner sep=1.5pt] at (4.1,0.72) {\small $\sigma^2\!=\!b$};
  \draw[thick,black] (0,1) -- (8.5,1);
  \node at (6.7,1.28) {\small shared normal ($a{=}b{=}0$)};
  \draw[very thick,black] plot[smooth] coordinates
    {(0,1.05) (1.5,1.12) (3,1.45) (4.1,2.0) (5.2,2.6) (6.5,2.9) (8.5,2.98)};
  \node at (6.7,2.52) {\small discriminative ($a{=}0{<}b$)};
  \draw[thick,black,densely dashed] plot[smooth] coordinates
    {(0,0.95) (1.5,0.85) (3,0.60) (4.1,0.42) (5.2,0.25) (6.5,0.10) (8.5,0.03)};
  \node at (2.3,0.22) {\small tangential ($0{<}a{\le}b$)};
\end{tikzpicture}
\caption{The guided terminal exponent $\mu_w(\sigma)$ by layer type on the
commuting model (schematic; noise decreases to the right). Guidance leaves the
shared-normal exponent at one and freezes tangential directions, but
re-stiffens the discriminative directions to $1+w$---the subspace on which
DDIM$+$CFG exits the class of fitted operators.  Here $a,b$ are the class and
marginal variances along the direction (Section~\ref{sec:guided-flow}).}
\label{fig:layers}
\end{figure}

\paragraph{Two orthogonal bills.}
This is a statement about the \emph{solver}. A parallel line studies how the
continuous guided law itself departs from the tilted conditional it is meant
to sample \citep{bradley2024cfg,chidambaram2024cfg}: that charges the
\emph{model}; we charge the \emph{discretization}---our reference solution is
always the exact guided flow's own pushforward. The two bills are orthogonal,
and conflating them has obscured how much of guidance's terminal
pathology---the oversaturation and norm inflation seen at large $w$
\citep{sadat2025apg}---has a solver-induced component that appears even when
the exact guided flow is the reference, from stepping guidance with an
operator it has quietly un-fitted.

\paragraph{The repair is one coefficient.}
Rectifying the anomalous exponent $1+w$ by a gauge change and pushing the
$\sigma$-clock step through it yields a one-coefficient modification of CFG that
touches nothing else in the sampler---only the coefficient on the guidance
direction $\Du-\Dc$ changes:
\begin{align*}
  \text{CFG:}    \quad & x_{n+1}=\Dc+r\,(x_n-\Dc)+w(r-1)\,(\Du-\Dc),\\
  \text{fitted:} \quad & x_{n+1}=\Dc+r\,(x_n-\Dc)+\bigl(r^{1+w}-r\bigr)(\Du-\Dc),
\end{align*}
with $r=\sigma_{n+1}/\sigma_n$. It uses the same two denoiser evaluations per step---the
implementation delta is a single line, at zero additional NFE (network
function evaluations)---agrees with CFG to first order, and reduces to DDIM
at $w=0$. We call it the guided fitted step,
or \emph{fitted CFG}. It is the \emph{unique} scalar coefficient that is exact on the
discriminative terminal layer using no spectral information
(Lemma~\ref{lem:g2-unique}); it never reflects a coordinate across the class
manifold (Proposition~\ref{prop:gfit-sign}); and its one-step $r\to0$ limit
lands on the conditional denoiser $\Dc$ (the terminal-limit class projection),
where vanilla CFG lands on the overshot $\Dw$, the discrete mechanism of
oversaturation.

\paragraph{Contributions.}
\begin{itemize}[leftmargin=1.4em,itemsep=2pt,topsep=2pt]
\item \textbf{Structure and a barrier.} A layer classification of the guided
  terminal exponent (Proposition~\ref{prop:glayers}) and a guided clock barrier
  (Theorem~\ref{thm:gbarrier}) with three ordered step-size
  thresholds---reflection, residual-amplification failure
  (Section~\ref{sec:guided-program}), absolute stability---and the resulting
  two-tier guidance tax on step count (Corollary~\ref{cor:tax}).
\item \textbf{Oversaturation as an endpoint, and the guidance interval.} A
  model-free identity reading one-step DDIM$+$CFG's overshoot to $\Dw$ as the
  barrier's $r\to0$ endpoint (Remark~\ref{rem:onejump}), and a parameter-free
  prediction of the \emph{terminal} edge of limited-interval guidance
  \citep{kynkaanniemi2024interval} (Remark~\ref{rem:interval}).
\item \textbf{A spectrum-free fitted repair.} The guided fitted step
  (Definition~\ref{defn:gfitted}): first-order consistent, sign-preserving and
  terminal-exact on the model, unique (Lemma~\ref{lem:g2-unique}), and---on the
  discriminative crossover---free of the $\smin$-divergent blow-up and
  first-order accurate as $\smin\to0$ (Proposition~\ref{prop:g2-crossover-tax},
  Theorem~\ref{thm:cross-ap}): asymptotic-preserving in $\smin$, though not
  uniform in $w$.
\item \textbf{A scoped empirical program.} On learned CIFAR-10 \textsc{edm}
  \citep{karras2022edm} checkpoints, fitted CFG is a zero-extra-NFE high-guidance
  stabilizer: the residual and clipping certificates improve in both
  $w\in\{6.5,8\}$ cells; the
  hard-cell image evaluations (two 5k blocks, a 50k replication) give it the best
  FID with target accuracy preserved, and a DINOv2 backbone-swap keeps the same
  fitted-favoring ordering; a nine-cell grid gives $9/9$ FID wins over CFG. We
  report the
  limits alongside---the CFG-favoring KID split and threshold-derived interval
  baselines that win FID/KID at lower conditionality
  (Section~\ref{sec:checkpoint-certs}).
\end{itemize}

Section~\ref{sec:discussion} collects the claim boundaries.

%% file: guided_sections_2_3.tex
%
%
%

\providecommand{\E}{\mathbb{E}}
\providecommand{\R}{\mathbb{R}}
\providecommand{\eexp}{\mathrm{e}}
\providecommand{\dd}{\mathrm{d}}
\providecommand{\smin}{\sigma_{\min}}
\providecommand{\smax}{\sigma_{\max}}
\providecommand{\Dc}{D_{\mathrm{c}}}
\providecommand{\Du}{D_{\mathrm{u}}}
\providecommand{\Dw}{D_{w}}
\providecommand{\etac}{\eta_{\mathrm{c}}}
\providecommand{\etau}{\eta_{\mathrm{u}}}
\providecommand{\etaw}{\eta_{w}}
\providecommand{\Eqref}[1]{Equation~(\ref{#1})}

\ifcsname ifarxiv\endcsname\else
  \expandafter\newif\csname ifarxiv\endcsname
  \csname arxivtrue\endcsname
\fi

\section{The guided flow and its terminal layers}\label{sec:guided-flow}

\paragraph{The unguided flow.}
We work throughout in the variance-exploding family: $p_\sigma = p_{\rm
data} * \mathcal N(0,\sigma^2 I)$ for $\sigma \in [\smin,\smax]$, with
denoiser $D(x,\sigma) = \E[x_0 \mid x_\sigma = x]$ and normalized residual
$\eta = (x - D)/\sigma$---the noise prediction $\hat\varepsilon$ of practice
($-\sigma\nabla_x \log p_\sigma$); not DDIM's stochasticity parameter, which
is $0$ throughout. The probability flow ODE is $\dd X/\dd\sigma =
\eta(X,\sigma)$, and the decomposition $x = D + \sigma\eta$ splits the state
into a slow manifold component and a stiff normal component---normal to the
data manifold, contracting fastest---whose exponent
is $1$: on a normal coordinate the exact flow contracts by
$(\sigma_{n+1}/\sigma_n)^1$ per step. A fitted-operator analysis of the
unguided terminal layer~\citep{ap2026} shows
that a frozen-field Euler step is layer-exact if and
only if its clock is affine in $\sigma$, making the $\sigma$-clock
step---algebraically the deterministic DDIM update \citep{song2021ddim}---the
unique fitted operator
up to affine reparameterization, with rectified flow its flow-matching
counterpart; this fitted-operator property is what makes it
asymptotic-preserving as $\smin \to 0$. The present paper asks what
survives of that protection under guidance.

\paragraph{Guidance as a second velocity field.}
Classifier-free guidance \citep{hosalimans2022cfg} replaces the conditional
denoiser $\Dc(x,\sigma)$ by the extrapolation
\begin{equation}\label{eq:guided-denoiser}
  \Dw \;=\; (1+w)\,\Dc \;-\; w\,\Du,
  \qquad w \ge 0,
\end{equation}
where $\Du$ is the unconditional (marginal) denoiser and $g = 1+w$ is the
guidance scale of practice, so $w=6.5$ is CFG scale $g=7.5$. Writing $\etac = (x-\Dc)/\sigma$ and $\etau =
(x-\Du)/\sigma$, the sampled dynamics is the \emph{guided flow}
\begin{equation}\label{eq:guided-flow}
  \frac{\dd X}{\dd \sigma} \;=\; \etaw
  \;=\; (1+w)\,\etac - w\,\etau
  \;=\; \etac + w\,\gamma,
  \qquad
  \gamma \;:=\; \etac - \etau \;=\; \frac{\Du - \Dc}{\sigma},
\end{equation}
with $\gamma$ the normalized guidance direction. \Eqref{eq:guided-flow}
is the dynamical statement of guidance: the sampler
follows the conditional velocity plus $w$ times the discriminative
correction. Two facts follow. First, $\Dw$ is in
general not a VE posterior mean with a positive-semidefinite Tweedie
covariance, so $\etaw$ is not the residual of such a denoiser; the rigidity
theorem that singles out DDIM as layer-exact inside the
unguided denoiser class~\citep{ap2026} no longer applies directly. The exit is
quantitative, not merely definitional: for any true denoiser,
$J_D = \nabla_x D =
\mathrm{Cov}(X_0 \mid x)/\sigma^2 \succeq 0$ \citep{efron2011tweedie} caps
the exponent of the residual field at $1$ (on the model, the residual exponent
is $A \le 1$ in \eqref{eq:AB}), while on discriminative directions the guided
exponent exceeds $1$ at every $\sigma$
(Proposition~\ref{prop:glayers})---no repackaging of any denoiser, affine
or otherwise, reproduces the guided field.
Second, the guided flow carries a
\emph{second} parameter: any AP statement must state its $w$-dependence,
and the fitted step below removes the $\smin$-divergent terminal barrier
for bounded $w$, though not uniformly in $w$.

\paragraph{The model problem.}
Our analysis is carried out on the guided analog of the unguided model
problem~\citep{ap2026}.

\begin{hypothesis}[commuting class--marginal pair]\label{hyp:commuting}
$p_{\mathrm c} = \mathcal N(0, C_{\mathrm c})$ and $p_{\mathrm u} =
\mathcal N(0, C_{\mathrm u})$ with $[C_{\mathrm c}, C_{\mathrm u}] = 0$,
and on the shared eigenbasis the spectra satisfy $a_i \le b_i$.
\end{hypothesis}

The ordering $a_i \le b_i$ is the \emph{class-subset hypothesis}: the class
is narrower than the marginal in every probed direction (as when the
marginal is a mixture whose components share the class covariance). Real
learned checkpoints need not obey it; the reverse case is a genuine model
boundary, discussed in Remark~\ref{rem:reverse}. On an eigen-coordinate
$\xi$ with class variance
$a$ and marginal variance $b$, the two denoisers are linear and the
residuals are
\begin{equation}\label{eq:AB}
  \etac = A\,\frac{\xi}{\sigma}, \qquad
  \etau = B\,\frac{\xi}{\sigma}, \qquad
  A := \frac{\sigma^2}{a+\sigma^2}, \quad
  B := \frac{\sigma^2}{b+\sigma^2}, \quad
  0 \le B \le A \le 1 .
\end{equation}
The guided flow \eqref{eq:guided-flow} therefore closes coordinate-wise,
\begin{equation}\label{eq:mu}
  \frac{\dd \log \xi}{\dd \log \sigma}
  \;=\; \mu_w(\sigma)
  \;:=\; (1+w)A - wB
  \;=\; A + w\,(A - B),
\end{equation}
and every question about clocks, stability, and fitted operators reduces to
the behavior of the \emph{exponent function} $\mu_w$.

\begin{proposition}[layer classification under guidance]\label{prop:glayers}
Under Hypothesis~\ref{hyp:commuting}, as $\sigma \to 0$ the exponent
function \eqref{eq:mu} satisfies:
(i) on shared normal directions ($a = b = 0$), $\mu_w \equiv 1$: the
guidance weight cancels and the layer is the unguided exponent-$1$
layer;
(ii) on tangential directions ($0 < a \le b$), $\mu_w \to 0$: the
coordinate freezes;
(iii) on \emph{discriminative normal} directions ($a = 0 < b$),
\begin{equation}\label{eq:crossover}
  \mu_w(\sigma) \;=\; 1 + w\,\frac{b}{b+\sigma^2}
  \;\longrightarrow\; 1+w ,
\end{equation}
monotonically as $\sigma$ decreases, with crossover midpoint at
$\sigma^2 = b$.
\end{proposition}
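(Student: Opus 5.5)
The proof is a direct computation from the closed-form exponent \eqref{eq:mu}. Throughout I will substitute $A=\sigma^2/(a+\sigma^2)$ and $B=\sigma^2/(b+\sigma^2)$ from \eqref{eq:AB}, treat $\mu_w=A+w(A-B)$ as an explicit rational function of $\sigma^2$, and read off each case. Hypothesis~\ref{hyp:commuting} enters only through the ordering $a\le b$, which gives $0\le B\le A\le 1$.

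For (i), with $a=b=0$ I get $A=B=1$ identically for every $\sigma>0$. The guidance term $w(A-B)$ therefore vanishes, and $\mu_w\equiv1$ exactly, not just in the limit. For (ii), with $0<a\le b$ both $A=\sigma^2/(a+\sigma^2)$ and $B$ are $O(\sigma^2/a)\to0$. Since $0\le A-B\le A$, I have $0\le\mu_w\le(1+w)A\to0$, so the coordinate freezes. I will also record the rate $\mu_w=O(\sigma^2)$, since it shows the freezing is quadratic in $\sigma$.

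For (iii), with $a=0<b$ I have $A=1$, so $\mu_w=1+w(1-B)=1+w\,b/(b+\sigma^2)$, which is \eqref{eq:crossover}. The map $\sigma^2\mapsto b/(b+\sigma^2)$ is strictly decreasing in $\sigma^2$; its derivative is $-b/(b+\sigma^2)^2<0$. Hence $\mu_w$ increases strictly and monotonically as $\sigma$ decreases whenever $w>0$ (it is constant when $w=0$), and it tends to $1+w$ as $\sigma\to0$. For the crossover midpoint I will note that $\mu_w$ interpolates between $1$ (as $\sigma\to\infty$) and $1+w$ (as $\sigma\to0$). The midpoint value $1+w/2$ is attained exactly when $b/(b+\sigma^2)=1/2$, that is, at $\sigma^2=b$. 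In $\lambda=-\log\sigma$ this is a logistic curve centred at $\lambda=-\tfrac12\log b$, which matches the schematic in Figure~\ref{fig:layers}.

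No step here is a real obstacle. The only care needed is definitional. Zero variances $a=0$ or $b=0$ must be read as degenerate Gaussians, with the denoiser formulas \eqref{eq:AB} still valid in the limit, giving $A=1$. I also need to state what "crossover midpoint" means, namely the $\sigma$ at which $\mu_w$ reaches half of its total rise $w$.
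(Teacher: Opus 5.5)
Your proposal is correct and matches the paper's proof: both substitute $A=\sigma^2/(a+\sigma^2)$ and $B=\sigma^2/(b+\sigma^2)$ into $\mu_w=A+w(A-B)$ and read off each case, with $A\equiv1$ giving $\mu_w=1+wb/(b+\sigma^2)$ on discriminative directions. Your bound $0\le\mu_w\le(1+w)A$ for case (ii), the derivative check for monotonicity, and the identification of the midpoint $\mu_w=1+w/2$ at $\sigma^2=b$ only spell out what the paper calls immediate.
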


\newcommand{\ProofGLayers}{%
For $a=0$ one has $A \equiv 1$, so $\mu_w = (1+w) - wB = 1 +
w(1-B) = 1 + wb/(b+\sigma^2)$; the limits and monotonicity are immediate.
Cases (i) and (ii) follow from $A, B \to 1$ and $A, B \to 0$ respectively.
}
\ifarxiv\begin{proof}\ProofGLayers\end{proof}\else\noindent\emph{(Proofs
for Sections~\ref{sec:guided-flow} and~\ref{sec:guided-program} are
deferred to Appendix~\ref{app:proofs}.)}\fi

Proposition~\ref{prop:glayers} is the structural fact of the paper.
Guidance does not stiffen the flow uniformly; it re-stiffens \emph{exactly
the discriminative subspace}---the directions collapsed within the class but
present in the marginal ($a=0<b$)---and
\eqref{eq:crossover} locates where the re-stiffening begins: the anomalous
exponent switches on as $\sigma$ descends through the class-separation
scale $\sqrt b$. Above that scale the guided flow is, to leading order, the
unguided flow; below it, the terminal layer runs at exponent $1+w$, and $w$
enters the singular structure of the problem on the same footing as
$\smin$.

\begin{lemma}[exact guided factor]\label{lem:gfactor}
Under Hypothesis~\ref{hyp:commuting}, the exact solution of
\eqref{eq:guided-flow} on an eigen-coordinate over one step $\sigma_n \to
\sigma_{n+1}$ is $\xi_{n+1} = \Phi_w\,\xi_n$ with
\begin{equation}\label{eq:exact-factor}
  \Phi_w
  \;=\;
  \left(\frac{a+\sigma_{n+1}^2}{a+\sigma_n^2}\right)^{\!\frac{1+w}{2}}
  \left(\frac{b+\sigma_n^2}{b+\sigma_{n+1}^2}\right)^{\!\frac{w}{2}} .
\end{equation}
In particular, on a discriminative coordinate ($a=0$), writing $r =
\sigma_{n+1}/\sigma_n$,
\begin{equation}\label{eq:exact-disc}
  \Phi_w \;=\; r^{\,1+w}
  \left(\frac{b+\sigma_n^2}{b+\sigma_{n+1}^2}\right)^{\!\frac{w}{2}},
\end{equation}
which tends to $r^{1+w}$ once $\sigma_n^2 \ll b$.
\end{lemma}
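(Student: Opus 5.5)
The plan is to integrate the scalar linear ODE \eqref{eq:mu} exactly. Under Hypothesis~\ref{hyp:commuting}, the guided flow \eqref{eq:guided-flow} is diagonal in the shared eigenbasis. By \eqref{eq:AB}, each eigen-coordinate satisfies
\[
\frac{\dd \xi}{\dd\sigma}=\mu_w(\sigma)\,\frac{\xi}{\sigma}.
\]
This is a linear ODE with a continuous coefficient on $[\sigma_{n+1},\sigma_n]\subset(0,\infty)$. Its solution is therefore $\xi(\sigma_{n+1})=\xi(\sigma_n)\exp\bigl(\int_{\sigma_n}^{\sigma_{n+1}}\mu_w(s)\,\dd s/s\bigr)$, and in particular a coordinate starting at zero stays at zero. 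It remains to evaluate this integral in closed form.

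I split $\mu_w=(1+w)A-wB$ and integrate each piece separately. The key observation is
\[
\frac{A(s)}{s}=\frac{s}{a+s^2}=\tfrac12\,\frac{\dd}{\dd s}\log(a+s^2),
\]
and the same identity holds for $B$ with $b$ in place of $a$. Hence
\[
\int_{\sigma_n}^{\sigma_{n+1}}\frac{\mu_w(s)}{s}\,\dd s=\frac{1+w}{2}\log\frac{a+\sigma_{n+1}^2}{a+\sigma_n^2}-\frac{w}{2}\log\frac{b+\sigma_{n+1}^2}{b+\sigma_n^2}.
\]
Exponentiating gives \eqref{eq:exact-factor}. Flipping the ratio in the second factor absorbs the minus sign. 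The identity holds for $a\ge0$ because $\sigma>0$ keeps every logarithm finite, even in the degenerate case $a=0$.

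For the discriminative case I set $a=0$. The first factor becomes $(\sigma_{n+1}^2/\sigma_n^2)^{(1+w)/2}=r^{1+w}$, which is \eqref{eq:exact-disc}. For the final claim, note that $\sigma_{n+1}\le\sigma_n$. When $\sigma_n^2\ll b$, both $(b+\sigma_n^2)/b$ and $(b+\sigma_{n+1}^2)/b$ lie in $[1,1+\sigma_n^2/b]$, so the correction factor is
\[
\bigl(1+O(\sigma_n^2/b)\bigr)^{w/2}=1+O(w\sigma_n^2/b)\to1.
\]
This convergence holds for fixed $w$, which is consistent with the paper's caveat that the accuracy is not uniform in $w$.

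There is no real obstacle here: the whole proof rests on recognizing $A/s$ and $B/s$ as logarithmic derivatives. The one point needing care is the degenerate case $a=0$, where I will check directly that the closed form still holds, using that $\sigma\ge\smin>0$ on the step.
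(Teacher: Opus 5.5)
Your proposal is correct and follows essentially the paper's own argument. The paper integrates $\mathrm{d}\log\xi/\mathrm{d}\log\sigma=\mu_w$ using $\mathrm{d}\log(c+\sigma^2)=2A_c\,\mathrm{d}\log\sigma$ and exponentiates the increment, which is your identification of $A/s$ and $B/s$ as logarithmic derivatives; your explicit handling of $a=0$ and of the fixed-$w$ limit $\sigma_n^2\ll b$ just spells out what the paper leaves as immediate.
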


\newcommand{\ProofGFactor}{%
Integrate \eqref{eq:mu}: $\int \mu_w \,\dd\log\sigma = \tfrac{1+w}{2}
\log(a+\sigma^2) - \tfrac{w}{2}\log(b+\sigma^2)$ up to a constant, since
$\dd\log(c+\sigma^2) = 2A_c\,\dd\log\sigma$ with $A_c =
\sigma^2/(c+\sigma^2)$. Exponentiate the increment.
}
\ifarxiv\begin{proof}\ProofGFactor\end{proof}\fi

The singular limit now has two knobs. As $\smin \to 0$ the discriminative
layer contracts by the anomalous power $\sigma^{1+w}$; as $w$ grows at
fixed $\smin$ the same layer stiffens without bound. We ask two questions:
which part of DDIM$+$CFG's failure is the pure terminal-layer solver barrier,
and can that barrier be removed by a one-coefficient fitted repair? Uniform
accuracy through the finite-$\sigma$ crossover is separate, needing either mesh
resolution of the crossover or the spectrum-aware Gaussian factor.

\section{The guided barrier and the fitted operator}\label{sec:guided-program}

This section establishes one claim:
\emph{classifier-free guidance introduces a second singular parameter $w$,
and on discriminative terminal directions the $\sigma$-clock frozen-field
step---DDIM applied to the guided denoiser---is no longer fitted.} The
program has three parts. First, a barrier: the guided analog of the
unguided clock dichotomy~\citep{ap2026}, with sharp thresholds in the log-step
$h$ and weight $w$
(Theorem~\ref{thm:gbarrier}). Second, an operator: a fitted step that
removes the pure discriminative terminal-layer reflection and
$\smin$-divergent residual blow-up at zero additional cost
(Definition~\ref{defn:gfitted}). Third, accuracy: a first-order certificate on
the discriminative crossover (Theorem~\ref{thm:cross-ap}); the whole-model
theory remains future work. Throughout, the
reference solution is the exact guided flow's own pushforward: we charge
the \emph{discretization} of \eqref{eq:guided-flow}, not the distortion of
the continuous guided law away from the tilted conditional
(Section~\ref{sec:intro}).

\paragraph{The frozen step.}
DDIM applied to $\Dw$---equivalently, $\sigma$-clock Euler with the guided
field frozen at the left endpoint---is
\begin{equation}\label{eq:cfg-step}
  x_{n+1} \;=\; x_n + (\sigma_{n+1}-\sigma_n)\,\etaw(x_n,\sigma_n),
\end{equation}
which on an eigen-coordinate of the model contracts by
\begin{equation}\label{eq:Gcfg}
  G_{\rm CFG} \;=\; 1 - (1-r)\bigl[(1+w)A - wB\bigr],
  \qquad r = \frac{\sigma_{n+1}}{\sigma_n} = \eexp^{-h}.
\end{equation}
On the pure discriminative layer $(A,B)=(1,0)$ this is
\begin{equation}\label{eq:Gw}
  G_w(h) \;=\; 1 - (1+w)\,(1-\eexp^{-h}) \;=\; (1+w)\,\eexp^{-h} - w ,
\end{equation}
to be compared with the exact factor $\eexp^{-(1+w)h}$ of
Lemma~\ref{lem:gfactor}. Here $h = \log(\sigma_n/\sigma_{n+1})$ is the step
in $\lambda = -\log\sigma$ (half the VE log-SNR), so a uniform
$\lambda$-mesh has constant $h$. Along a trajectory, call
$\max_n \|\etaw(x_n,\sigma_n)\|/\|\etaw(x_0,\sigma_0)\|$ the
\emph{residual-amplification certificate}, written $\mathrm{amp}$; the
sampler already forms every quantity in it, and a scheme is residual-stable if
the certificate stays bounded as $\smin \to 0$.
Everything in Theorem~\ref{thm:gbarrier} is a statement about the
elementary function \eqref{eq:Gw}.

\begin{theorem}[guided clock barrier]\label{thm:gbarrier}
Consider the frozen step \eqref{eq:cfg-step} on the pure discriminative
layer, $\etaw = (1+w)\,\xi/\sigma$ with $w > 0$, on a uniform
$\lambda$-mesh of step $h$. Let
\begin{equation}\label{eq:thresholds}
  h_\flat \;=\; \log\!\Bigl(1+\tfrac1w\Bigr),
  \qquad
  h_\sharp \;=\; \log\!\Bigl(1+\tfrac2w\Bigr),
  \qquad
  h_\infty \;=\; \log\!\Bigl(\tfrac{1+w}{\,w-1\,}\Bigr)\ (w>1).
\end{equation}
Then $h_\flat < h_\sharp < h_\infty$, and:
\begin{itemize}
\item[(a)] \emph{Sign.} $G_w(h) > 0$ iff $h < h_\flat$; equivalently, at
fixed $h$, sign preservation fails once $w > 1/(\eexp^{h}-1)$. For $h >
h_\flat$ every step reflects the coordinate across the class manifold.
\item[(b)] \emph{Residual amplification.} The per-step residual
amplification is $|G_w(h)|\,\eexp^{h}$, which is $\le 1$ iff $h \le
h_\sharp$; equivalently, raw amplification begins once $w >
2/(\eexp^{h}-1)$. For $h > h_\sharp$ the amplification factor is
$w\eexp^{h} - (1+w) > 1$ per step. On a terminal subwindow
$\sigma \in [\smin, \varepsilon\sqrt b\,]$, where $B \le \varepsilon^2/(1+\varepsilon^2)$
makes the exact per-step factor pure-layer up to $O(\varepsilon^2)$, of
$\lambda$-length $\Lambda_\varepsilon = \log(\varepsilon\sqrt b/\smin)$ the guided
residual grows by
\begin{equation}\label{eq:blowup}
  \bigl(w\eexp^{h}-(1+w)\bigr)^{\Lambda_\varepsilon/h}
  \;=\;
  \Bigl(\frac{\varepsilon\sqrt b}{\smin}\Bigr)^{\!\rho},
  \qquad
  \rho = \frac{\log\bigl(w\eexp^{h}-(1+w)\bigr)}{h} > 0 ,
\end{equation}
so for fixed $\varepsilon>0$ and $h > h_\sharp$ the scheme fails the amplification
certificate as $\smin \to 0$.
\item[(c)] \emph{Absolute stability.} $|G_w(h)| \le 1$ for all $h$ when $w
\le 1$; for $w > 1$, iff $h \le h_\infty$.
\end{itemize}
All three thresholds decay like $c/w$ as $w \to \infty$, with $c = 1, 2,
2$ respectively.
\end{theorem}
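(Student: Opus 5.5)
Since $G_w(h)=(1+w)\eexp^{-h}-w$ from \eqref{eq:Gw} is an explicit, strictly decreasing function of $h$ with $G_w(0)=1$ and $G_w(\infty)=-w$, every claim reduces to solving a scalar equation in $s:=\eexp^{-h}\in(0,1)$. We will solve $G_w=c$ for the three levels $c\in\{0,\,-\eexp^{-h},\,-1\}$ that encode sign, residual amplification, and absolute stability. Then we will order the resulting thresholds and read off their large-$w$ asymptotics.

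\textbf{(a).} $G_w(h)>0$ iff $(1+w)\eexp^{-h}>w$, i.e.\ $\eexp^{h}<1+1/w$, i.e.\ $h<h_\flat$. Solving instead for $w$ at fixed $h$ gives $w<1/(\eexp^h-1)$. Since $\xi_{n+1}=G_w\xi_n$, a negative factor flips the sign of the coordinate, which is the reflection across $\xi=0$, the class manifold on this layer.

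\textbf{(b).} On the pure layer $\etaw=(1+w)\xi/\sigma$, so the ratio of consecutive residuals is
$$\frac{|\xi_{n+1}|/\sigma_{n+1}}{|\xi_n|/\sigma_n}=|G_w|\,\eexp^{h}.$$
For $h\le h_\flat$ we have $0\le G_w\eexp^h=(1+w)-w\eexp^h\le 1$. For $h>h_\flat$ we have $|G_w|\eexp^h=w\eexp^h-(1+w)$, which is at most $1$ iff $\eexp^h\le 1+2/w$, i.e.\ $h\le h_\sharp$; this also yields the $w$-form $w>2/(\eexp^h-1)$. The blow-up \eqref{eq:blowup} follows by compounding the constant factor over $\Lambda_\varepsilon/h$ steps and writing the result as $\exp(\rho\Lambda_\varepsilon)$. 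The only delicate point is the phrase "pure-layer up to $O(\varepsilon^2)$." We will handle it by noting that on the subwindow $B\le\varepsilon^2/(1+\varepsilon^2)$, so the true step factor $G_{\rm CFG}$ of \eqref{eq:Gcfg} differs from $G_w$ by at most $(1-r)wB=O(\varepsilon^2)$. For fixed $\varepsilon$ small enough that the perturbed factor stays above $1$, the exponent $\rho$ is perturbed continuously and remains positive. The statement itself is phrased for the pure layer, so this remark only serves to justify applying it on the window.

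\textbf{(c).} The condition $G_w\le 1$ always holds. The condition $G_w\ge -1$ is equivalent to $(1+w)\eexp^{-h}\ge w-1$. This holds for all $h$ when $w\le1$, and for $w>1$ it is equivalent to $h\le\log\frac{1+w}{w-1}=h_\infty$.

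\textbf{Ordering and asymptotics.} $h_\flat<h_\sharp$ is immediate from $1/w<2/w$. For $h_\sharp<h_\infty$ we need $1+2/w<(1+w)/(w-1)=1+2/(w-1)$, which holds for $w>1$. The asymptotics follow from $\log(1+x)\sim x$: $h_\flat\sim 1/w$, $h_\sharp\sim 2/w$, and $h_\infty=\log(1+2/(w-1))\sim 2/w$.

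The main obstacle is purely expository rather than technical. One point is that the ordering $h_\sharp<h_\infty$ requires $w>1$; for $w\le1$ we will interpret $h_\infty=\infty$. The other is the justification of the subwindow approximation in (b), which is the only place the argument leaves the pure layer.
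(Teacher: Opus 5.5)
Your proof is correct and follows the paper's argument essentially line by line. Both work directly with $G_w=(1+w)\eexp^{-h}-w$, split the residual ratio $|G_w|\eexp^{h}$ into the branches $G_w\ge0$ and $G_w\le0$, compound the pure-layer factor over $\Lambda_\varepsilon/h$ steps, and obtain the ordering from $\frac{1+w}{w-1}=1+\frac{2}{w-1}>1+\frac{2}{w}$. Your extra remarks on the $O(\varepsilon^2)$ window perturbation and the $c/w$ asymptotics go slightly beyond what the paper writes out; on the true crossover the additional factor $\mu_{n+1}/\mu_n\ge1$ only reinforces the growth.
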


\newcommand{\ProofGBarrier}{%
From \eqref{eq:Gw}: $G_w > 0 \iff \eexp^{-h} > w/(1+w) \iff h <
\log(1+1/w)$, which is (a). For (b), on the pure layer $\etaw \propto
\xi/\sigma$, so the residual ratio per step is $|\xi_{n+1}/\xi_n| \cdot
\sigma_n/\sigma_{n+1} = |G_w|\eexp^{h}$. On the branch $G_w \le 0$ this is
$(w - (1+w)\eexp^{-h})\eexp^{h} = w\eexp^{h} - (1+w)$, which exceeds $1$
iff $\eexp^{h} > 1 + 2/w$; on the branch $G_w \ge 0$ it equals $(1+w) -
w\eexp^{h} \le 1$ always. Compounding over $\Lambda_\varepsilon/h$ steps gives
\eqref{eq:blowup}. For (c), $G_w \le 1$ always, and $G_w \ge -1 \iff
(1+w)\eexp^{-h} \ge w-1$, vacuous for $w \le 1$ and equivalent to $h \le
h_\infty$ otherwise. The ordering $h_\sharp < h_\infty$ follows from
$\frac{1+w}{w-1} - \bigl(1+\frac2w\bigr) = \frac{2}{w(w-1)} > 0$.
}
\ifarxiv\begin{proof}\ProofGBarrier\end{proof}\fi

In words: sign preservation fails first (a, above $h_\flat$: every step
reflects across the class manifold), the certificate second (b, above
$h_\sharp$: the residual compounds, diverging as $\smin\to0$), absolute
stability last (c, only for $w>1$).

\begin{corollary}[two-tier guidance tax]\label{cor:tax}
On a uniform $\lambda$-mesh covering the discriminative window (of
$\lambda$-length $\Lambda_b = \log(\sqrt b/\smin)$), avoiding
reflection requires
\begin{equation}\label{eq:tax}
  N \;\ge\; \frac{\Lambda_b}{\log(1+1/w)} \;\sim\; w\,\Lambda_b ,
\end{equation}
while merely keeping the residual nonexpansive requires $N \ge \Lambda_b/\log(1+2/w)
\sim \tfrac{w}{2}\Lambda_b$. A uniform \emph{global} mesh must meet this bound
throughout the window, so $\Lambda_b$ is replaced by the full horizon
$\Lambda=\log(\smax/\smin)$: at working scale $g = 7.5$
($\Lambda = \log(80/0.002) \approx 10.6$), reflection-free sampling already demands
$N \gtrsim 74$ steps---an order of magnitude above unguided-DDIM-accurate
budgets \citep{karras2022edm}.
\end{corollary}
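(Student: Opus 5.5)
The corollary follows by turning the per-step thresholds of Theorem~\ref{thm:gbarrier} into step counts. On a uniform $\lambda$-mesh, a window of $\lambda$-length $\Lambda_b=\log(\sqrt b/\smin)$ is covered by $N$ steps of size $h=\Lambda_b/N$. Reflection is avoided exactly when every step has $G_w(h)>0$, which by part (a) means $h<h_\flat=\log(1+1/w)$. Rearranging gives $N>\Lambda_b/\log(1+1/w)$; taking closure, this is the stated bound \eqref{eq:tax}. In the same way, part (b) says the residual is nonexpansive per step iff $h\le h_\sharp=\log(1+2/w)$, which gives $N\ge\Lambda_b/\log(1+2/w)$.

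For the asymptotics I will use $\log(1+x)=x+O(x^2)$ as $x\to0$. This gives $\log(1+1/w)\sim 1/w$ and $\log(1+2/w)\sim 2/w$, hence the bounds behave like $w\Lambda_b$ and $\tfrac{w}{2}\Lambda_b$ respectively. These are the constants $c=1,2$ already noted at the end of Theorem~\ref{thm:gbarrier}.

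The global-mesh statement needs one interpretive step, and I expect it to be the only real obstacle. A uniform mesh on $[\smin,\smax]$ has a single step $h=\Lambda/N$, where $\Lambda=\log(\smax/\smin)$. That same $h$ is therefore used inside the discriminative window, so the condition $h<h_\flat$ must hold with $h=\Lambda/N$. This replaces $\Lambda_b$ by $\Lambda$ and gives $N\ge\Lambda/\log(1+1/w)$.

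I will state one caveat explicitly. The thresholds are derived on the pure layer $(A,B)=(1,0)$, but inside the window $B$ is not exactly zero. It suffices to note that $G_{\rm CFG}$ from \eqref{eq:Gcfg} tends to $G_w$ as $\sigma\to0$, so the pure-layer threshold governs the terminal steps. Alternatively, the window can be restricted to $[\smin,\varepsilon\sqrt b]$ as in Theorem~\ref{thm:gbarrier}(b).

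Finally, I check the number at $g=7.5$, i.e.\ $w=6.5$. Here $\log(1+1/6.5)=\log(15/13)\approx0.1431$, and $\Lambda=\log(80/0.002)=\log 4\times10^4\approx10.60$. The ratio is approximately $74.1$, which gives $N\gtrsim74$.
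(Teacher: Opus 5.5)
Your proposal is correct and follows the same route the paper implicitly takes. The paper gives no separate proof: the corollary is read off Theorem~\ref{thm:gbarrier}(a),(b) by setting $h=\Lambda_b/N$ (resp.\ $h=\Lambda/N$ on a global uniform mesh), using $\log(1+c/w)\sim c/w$ for the asymptotics and $\log(80/0.002)/\log(15/13)\approx 74.05$ for the count. One small tidy-up: $G_w(h_\flat)=0$ annihilates the coordinate rather than reflecting it, so avoiding reflection ($G_w<0$) is exactly $h\le h_\flat$, which gives the non-strict bound directly without your closure step.
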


\begin{remark}[the reflecting window]\label{rem:window}
In the window $h \in (h_\flat, h_\sharp]$ the scheme is \emph{reflecting
but nonexpansive}: the flip does not grow the residual. The flip is
the mechanism of visible artifacts---the iterate lands on the wrong side
of the class manifold each step---while the amplification crossing at $h_\sharp$ is
where the certificate, and any uniform accuracy claim, actually fails.
Practitioners tune $w$ down at low $\sigma$ well before the certificate
diverges: part (a), not part (b), is the threshold their tuning discovers.
\end{remark}

\begin{remark}[the one-jump limit is oversaturation]\label{rem:onejump}
The following identity is model-free. A single frozen step
\eqref{eq:cfg-step} from $\sigma_n$ to $\sigma_{n+1} = r\sigma_n$ with $r
\to 0$ gives
\begin{equation}\label{eq:onejump}
  x_{n+1} \;\to\; x_n - \sigma_n\,\etaw(x_n,\sigma_n)
  \;=\; (1+w)\,\Dc - w\,\Du \;=\; \Dw(x_n,\sigma_n):
\end{equation}
one-step DDIM$+$CFG lands on the \emph{extrapolated} denoiser, displaced
from the class manifold by $-w(\Du-\Dc)$. On the model this is the
reflection $\xi \mapsto -w\,\xi$ of Theorem~\ref{thm:gbarrier}(a) pushed to
its endpoint. We read this as the discrete mechanism of the oversaturation
and norm-inflation phenomenology reported for large guidance scales
\citep{sadat2025apg}: it is a property of the \emph{solver} at coarse steps, not
of the continuous guided flow---on the calibration model the exact flow
contracts to $\Dc$ as $r^{1+w}$ while the frozen step overshoots to $\Dw$.
\end{remark}

\paragraph{The guided fitted step.}
The repair is dictated by the same logic that produced DDIM in the
unguided setting \citep{ap2026}: integrate the stiff layer exactly and
freeze what is slow. The
gauge $\tilde\xi = \sigma^{-w}\xi$ rectifies the anomalous exponent $1+w$
to exponent $1$; transporting the $\sigma$-clock Euler step through the
gauge and back yields, on the whole state,

\begin{definition}[guided fitted step]\label{defn:gfitted}
With $r = \sigma_{n+1}/\sigma_n$ and all fields evaluated at
$(x_n,\sigma_n)$,
\begin{equation}\label{eq:g2-denoiser}
  x_{n+1}
  \;=\;
  \Dc \;+\; r\,\bigl(x_n - \Dc\bigr)
  \;+\; \bigl(r^{\,1+w} - r\bigr)\,\bigl(\Du - \Dc\bigr) .
\end{equation}
equivalently, in residual form,
\begin{equation}\label{eq:g2-residual}
  x_{n+1}
  \;=\;
  x_n + (\sigma_{n+1}-\sigma_n)\,\etac
  \;+\; \sigma_n\,\bigl(r^{\,1+w}-r\bigr)\,\gamma .
\end{equation}
\end{definition}

The scheme touches exactly one coefficient of classifier-free guidance:
vanilla DDIM$+$CFG is \eqref{eq:g2-denoiser} with the coefficient
$w(r-1)$ in place of $r^{1+w}-r$ on the guidance direction $\Du - \Dc$. It
uses the same two denoiser evaluations per step, so the cost---and the
implementation delta---is one line.
Proposition~\ref{prop:gfit-sign} verifies the factor coordinate-wise;
Lemma~\ref{lem:g2-unique} pins the coefficient from terminal exactness alone.

\begin{proposition}[consistency]\label{prop:gconsist}
$r^{1+w} - r = w(r-1) + \tfrac12 w(w+1)\,h^2 + O(h^3)$ as $h \to 0$, so
\eqref{eq:g2-denoiser} is a first-order-consistent discretization of the
guided flow \eqref{eq:guided-flow}, agreeing with DDIM$+$CFG through
$O(h)$; and at $w = 0$ both coefficients vanish identically, collapsing the
scheme to DDIM.
\end{proposition}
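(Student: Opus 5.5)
The plan is a direct Taylor expansion in the log-step $h$, followed by a comparison of \eqref{eq:g2-denoiser} with the vanilla frozen step \eqref{eq:cfg-step}. I will not redo the consistency analysis of DDIM$+$CFG itself.

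\textbf{Step 1: the expansion.} With $r=\eexp^{-h}$, write $r^{1+w}-r=\eexp^{-(1+w)h}-\eexp^{-h}$ and expand both exponentials to second order with an $O(h^3)$ remainder:
\[
r^{1+w}-r \;=\; -wh+\tfrac12\bigl[(1+w)^2-1\bigr]h^2+O(h^3) \;=\; -wh+\tfrac12 w(w+2)h^2+O(h^3).
\]
Similarly,
\[
w(r-1) \;=\; -wh+\tfrac12 wh^2+O(h^3).
\]
Subtracting gives
\[
\bigl(r^{1+w}-r\bigr)-w(r-1) \;=\; \tfrac12\bigl[w(w+2)-w\bigr]h^2+O(h^3) \;=\; \tfrac12 w(w+1)h^2+O(h^3),
\]
which is the stated identity. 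For bounded $w$ the $O(h^3)$ constant is uniform, since it comes from the analytic remainders of two exponentials.

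\textbf{Step 2: consistency.} Use the remark after Definition~\ref{defn:gfitted}: vanilla DDIM$+$CFG is exactly \eqref{eq:g2-denoiser} with the coefficient $w(r-1)$ in place of $r^{1+w}-r$. Hence one step of the fitted scheme differs from one frozen step \eqref{eq:cfg-step}, taken from the same $(x_n,\sigma_n)$, by
\[
\bigl[(r^{1+w}-r)-w(r-1)\bigr](\Du-\Dc) \;=\; \tfrac12 w(w+1)h^2\,(\Du-\Dc)+O(h^3).
\]
Here $\Du-\Dc=\sigma_n\gamma$ is evaluated at the left endpoint, so it is bounded on compact sets wherever the denoisers are. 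The frozen step is $\sigma$-clock Euler for \eqref{eq:guided-flow}. Since $\sigma_{n+1}-\sigma_n=-\sigma_n(1-\eexp^{-h})=O(h)$ at fixed $\sigma_n>0$, it has local truncation error $O(h^2)$ in $\lambda$, and so it is first-order consistent. An $O(h^2)$ perturbation of a one-step map preserves $O(h^2)$ local error. Therefore \eqref{eq:g2-denoiser} is also first-order consistent, and it agrees with DDIM$+$CFG through $O(h)$.

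\textbf{Step 3: the case $w=0$.} At $w=0$, $r^{1+0}-r=0$ and $0\cdot(r-1)=0$ identically in $r$. Then \eqref{eq:g2-denoiser} reduces to $\Dc+r(x_n-\Dc)$, which is DDIM on $\Dc$.

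\textbf{Main obstacle.} The only delicate point is stating what "consistent" means, namely in which variable and uniformly in what. I will phrase it as a local error in $\lambda$ at fixed $\sigma_n$ bounded away from zero, with $w$ in a bounded set, and smooth fields. This keeps the $O(h^3)$ remainder and the bound on $\Du-\Dc$ uniform. I will not claim uniformity in $w$ or as $\sigma_n\to0$, consistent with the paper's stated caveats.
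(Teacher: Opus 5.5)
Your proposal is correct and follows the paper's route. The paper writes $r=\eexp^{-h}$, notes that both coefficients are $-wh+O(h^2)$, and expands the difference $\eexp^{-(1+w)h}-(1+w)\eexp^{-h}+w$ directly to get $\tfrac12 w(w+1)h^2+O(h^3)$; you expand each coefficient separately and then subtract, which is the same computation. Your Step 2 spells out the consistency reading (Euler local error plus an $O(h^2)$ perturbation, at fixed $\sigma_n>0$ and bounded $w$), which the paper leaves implicit.
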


\newcommand{\ProofGConsist}{%
$r^{1+w} - r = \eexp^{-(1+w)h} - \eexp^{-h}$ and $w(r-1) = w(\eexp^{-h}-1)$;
both are $-wh + O(h^2)$, and expanding to second order,
$r^{1+w}-r-w(r-1) = \eexp^{-(1+w)h}-(1+w)\eexp^{-h}+w = \tfrac12 w(w+1)h^2 + O(h^3)$.
}
\ifarxiv\begin{proof}\ProofGConsist\end{proof}\fi

\begin{proposition}[the fitted step is sign-preserving and terminal-exact]\label{prop:gfit-sign}
Under Hypothesis~\ref{hyp:commuting}, the step \eqref{eq:g2-denoiser}
contracts an eigen-coordinate by
\begin{equation}\label{eq:gfit-decomp}
  G_{\rm fit}
  \;=\; 1 - (1-r)A + \bigl(r^{1+w}-r\bigr)(A-B)
  \;=\; (1-A) \;+\; r\,B \;+\; r^{\,1+w}\,(A-B) ,
\end{equation}
a sum of nonnegative terms; hence $G_{\rm fit} > 0$ on every coordinate,
for every mesh, and every $w \ge 0$: the fitted step preserves the sign of
every analyzed coordinate.
It is exact in the three terminal regimes---$G_{\rm fit} = r$ on
shared normals $(A=B=1)$, $G_{\rm fit} = r^{1+w}$ on pure discriminative
normals $(A=1, B=0)$, and $G_{\rm fit} = 1$ on frozen tangential directions
$(A=B=0)$---and its one-jump limit is $x_{n+1} \to \Dc$, the conditional
denoiser (the terminal-limit class projection), in contrast with
\eqref{eq:onejump}.
\end{proposition}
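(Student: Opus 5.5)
The plan is a direct coordinate-wise computation on the model. Under Hypothesis~\ref{hyp:commuting} both denoisers are linear and diagonal in the shared eigenbasis. From \eqref{eq:AB}, on an eigen-coordinate $\xi$ we have $\Dc = \xi - \sigma\etac = (1-A)\xi$ and $\Du = (1-B)\xi$, with $A,B$ evaluated at $\sigma_n$. Hence $x_n - \Dc = A\xi$ and $\Du - \Dc = (A-B)\xi$. I would substitute these into \eqref{eq:g2-denoiser} and divide by $\xi$, which gives
\[
G_{\rm fit} = (1-A) + rA + (r^{1+w}-r)(A-B).
\]
This rearranges both to the first form $1-(1-r)A + (r^{1+w}-r)(A-B)$ and to the second form $(1-A) + rB + r^{1+w}(A-B)$, since the $rA$ and $-rA$ terms cancel against $-r(A-B)$. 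As a cross-check, the residual form \eqref{eq:g2-residual} with $\sigma_n\gamma = \Du-\Dc$ gives the same expression.

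For positivity I would use $0\le B\le A\le 1$ from \eqref{eq:AB} together with $r\in(0,1]$. Each of the three terms is then nonnegative. To get strict positivity I would note that the three weights are $1-A$, $B$ and $A-B$, which sum to $1$, so at least one of them is positive. Every coefficient multiplying these weights, namely $1$, $r$ and $r^{1+w}$, is strictly positive. Hence $G_{\rm fit}>0$ for every $w\ge0$ and every mesh. This convex-combination observation is the only point that needs care: the naive ``sum of nonnegatives'' argument gives $\ge0$, but not $>0$.

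The terminal regimes follow by plugging in. At $(A,B)=(1,1)$ we get $G_{\rm fit}=r$. At $(1,0)$ we get $G_{\rm fit}=r^{1+w}$, which matches the $\sigma_n^2\ll b$ limit of \eqref{eq:exact-disc} in Lemma~\ref{lem:gfactor}. At $(0,0)$ we get $G_{\rm fit}=1$, consistent with Proposition~\ref{prop:glayers}.

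For the one-jump limit I would argue model-free from \eqref{eq:g2-denoiser}. As $r\to0$ with $w\ge0$, both $r$ and $r^{1+w}$ tend to $0$, so $x_{n+1}\to\Dc(x_n,\sigma_n)$. I would then contrast this with \eqref{eq:onejump}. No step is a real obstacle; the only subtle point is the strictness of the positivity argument above.
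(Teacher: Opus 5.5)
Your proposal is correct and follows the paper's proof essentially verbatim. Like the paper, you substitute $\Dc=(1-A)\xi$, $x_n-\Dc=A\xi$ and $\Du-\Dc=(A-B)\xi$ into \eqref{eq:g2-denoiser}, regroup, invoke $0\le B\le A\le 1$, and read off the three regimes and the $r\to0$ limit. Your extra observation that the weights $1-A$, $B$, $A-B$ sum to $1$, giving $G_{\rm fit}\ge r^{1+w}>0$, is a worthwhile tightening: the paper's ``sum of nonnegative terms'' by itself only gives $G_{\rm fit}\ge0$, not the claimed strict inequality.
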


\newcommand{\ProofGFitSign}{%
On an eigen-coordinate, $\Dc = (1-A)\xi$, $x-\Dc = A\xi$, and $\Du - \Dc =
(A-B)\xi$; substituting into \eqref{eq:g2-denoiser} gives the first
expression in \eqref{eq:gfit-decomp}, and regrouping $1 - A + rA - r(A-B) +
r^{1+w}(A-B)$ gives the second. Nonnegativity of each term is
$0 \le B \le A \le 1$ and $0 < r < 1$. The regime values and the $r \to 0$
limit are read off termwise.
}
\ifarxiv\begin{proof}\ProofGFitSign\end{proof}\fi

\begin{lemma}[uniqueness of the scalar terminal coefficient]\label{lem:g2-unique}
Consider scalar coefficient updates of the form
\[
  x_{n+1} = \Dc + r(x_n-\Dc) + \alpha(r,w)(\Du-\Dc),
\]
where $\alpha$ depends only on $(r,w)$.  The unique such coefficient that is
exact on the pure discriminative terminal regime $(A,B)=(1,0)$ is
\[
  \alpha(r,w) = r^{1+w}-r .
\]
Consequently, terminal exactness, DDIM recovery at $w=0$, and exactness on
the shared-normal and tangent terminal regimes determine the fitted coefficient
without any spectrum information.
\end{lemma}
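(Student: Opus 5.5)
The plan is to reduce the update to its action on a single eigen-coordinate under Hypothesis~\ref{hyp:commuting}, exactly as in the proof of Proposition~\ref{prop:gfit-sign}. On a coordinate with residual coefficients $(A,B)$ from \eqref{eq:AB} we have $\Dc=(1-A)\xi$, $x_n-\Dc=A\xi$ and $\Du-\Dc=(A-B)\xi$. Substituting these, the general scalar update contracts the coordinate by
\[
G_\alpha \;=\; (1-A) + rA + \alpha(r,w)\,(A-B).
\]
This is an affine function of the unknown coefficient $\alpha$. Its slope $A-B$ is the only place where the guidance direction enters.

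\emph{Existence and uniqueness.} On the pure discriminative terminal regime $(A,B)=(1,0)$ the factor becomes $G_\alpha=r+\alpha(r,w)$. The exact factor there is $r^{1+w}$, by Lemma~\ref{lem:gfactor} in the limit $\sigma_n^2\ll b$ (equivalently, the target value already recorded for this regime in Proposition~\ref{prop:gfit-sign}). Exactness means $r+\alpha(r,w)=r^{1+w}$ for every $r\in(0,1)$ and every $w\ge0$. Because the slope $A-B=1$ is nonzero, this equation has the single solution $\alpha(r,w)=r^{1+w}-r$, pointwise in $(r,w)$. The coefficient already agrees with the guided fitted step \eqref{eq:g2-denoiser}, so this settles existence too.

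\emph{The ``consequently'' clause.} I would verify the other properties for this coefficient.
\begin{itemize}
\item At $w=0$, $\alpha(r,0)=r-r=0$, so the update is DDIM.
\item On shared normals $(A,B)=(1,1)$ the slope $A-B$ vanishes, so $G_\alpha=r$ for any $\alpha$. This is exact.
\item On frozen tangential directions $(A,B)=(0,0)$ we get $G_\alpha=1$ for any $\alpha$. This is also exact.
\end{itemize}
These two regimes impose no constraint on $\alpha$, and the DDIM recovery is consistent with the value already forced. The discriminative condition therefore carries all the information. It involves only $(r,w)$ and never $a$ or $b$, which is the precise sense of ``without any spectrum information.''

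The only delicate point is the phrase ``exact on the pure discriminative terminal regime.'' It has to be read as matching the limiting factor $r^{1+w}$ of \eqref{eq:exact-disc}, not the finite-$b$ factor. Under the finite-$b$ reading no $\alpha(r,w)$ could be exact, since that factor depends on $\sigma_n^2/b$. I would state this reading explicitly and otherwise expect the proof to be a two-line computation.
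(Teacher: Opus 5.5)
Your proposal is correct and matches the paper's own proof. The paper also reduces to an eigen-coordinate and notes that the shared-normal regime ($\Du-\Dc=0$) and the tangential regime (factor $1$) leave $\alpha$ unconstrained, then solves $r+\alpha(r,w)=r^{1+w}$ uniquely on the pure discriminative regime $(A,B)=(1,0)$. Your explicit reading of ``exact'' as matching the pure-layer factor $r^{1+w}$, rather than the finite-$b$ factor, is a harmless clarification consistent with how the paper treats that layer.
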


\newcommand{\ProofGTwoUnique}{%
On shared normals $(A,B)=(1,1)$ one has $\Du-\Dc=0$, so every scalar
coefficient gives the DDIM factor $r$.  On tangent directions
$(A,B)=(0,0)$, both denoisers equal the state and every coefficient gives
factor $1$.  On a pure discriminative normal $(A,B)=(1,0)$, the factor of
the scalar update is $r+\alpha(r,w)$.  Exactness requires
$r+\alpha(r,w)=r^{1+w}$, hence the displayed coefficient, uniquely.
}
\ifarxiv\begin{proof}\ProofGTwoUnique\end{proof}\fi

\begin{proposition}[finite crossover tax for the fitted step]\label{prop:g2-crossover-tax}
On a single discriminative Gaussian coordinate with $a=0<b$, define
\[
  q(\sigma)=1-B(\sigma)=\frac{b}{b+\sigma^2},\qquad
  \mu(\sigma)=\mu_w(\sigma)=1+wq(\sigma).
\]
Let $\sigma_0>\sigma_1>\cdots>\sigma_K$ be any decreasing mesh,
$r_n=\sigma_{n+1}/\sigma_n$, $q_n=q(\sigma_n)$, and
$\mu_n=\mu(\sigma_n)$.  For the fitted step \eqref{eq:g2-denoiser}, the
guided residual ratio over one step is
\begin{equation}\label{eq:g2-crossover-ratio}
  \frac{|\etaw(\xi_{n+1},\sigma_{n+1})|}
       {|\etaw(\xi_n,\sigma_n)|}
  =
  \frac{\mu_{n+1}}{\mu_n}
  \bigl[(1-q_n)+r_n^{\,w}q_n\bigr].
\end{equation}
Hence for every prefix $k$,
\begin{equation}\label{eq:g2-finite-tax}
  \frac{|\etaw(\xi_k,\sigma_k)|}
       {|\etaw(\xi_0,\sigma_0)|}
  \le
  \frac{\mu_k}{\mu_0}
  \le 1+w .
\end{equation}
Thus the fitted step removes the $\smin$-divergent terminal blow-up of
Theorem~\ref{thm:gbarrier} on the discriminative crossover, but it may still
pay a finite $O(1+w)$ crossover tax on a coarse mesh.
\end{proposition}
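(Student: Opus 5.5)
On a discriminative coordinate ($a=0<b$) everything reduces to scalar algebra with the factor $G_{\rm fit}$ from Proposition~\ref{prop:gfit-sign}. Since $a=0$ gives $A\equiv1$ and $B=1-q$, the decomposition \eqref{eq:gfit-decomp} specializes to
\[
  G_{\rm fit}^{(n)} = r_n(1-q_n) + r_n^{\,1+w} q_n ,
\]
so $\xi_{n+1}=G_{\rm fit}^{(n)}\xi_n$. This factor is positive, and I will use that positivity below.

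Next I compute the guided residual itself. By \eqref{eq:mu} (equivalently \eqref{eq:AB} with $A=1$), $\etaw(\xi,\sigma)=\mu(\sigma)\,\xi/\sigma$. Its one-step ratio is therefore
\[
  \frac{\mu_{n+1}}{\mu_n}\cdot\frac{|\xi_{n+1}|}{|\xi_n|}\cdot\frac{\sigma_n}{\sigma_{n+1}}
  = \frac{\mu_{n+1}}{\mu_n}\,\frac{G_{\rm fit}^{(n)}}{r_n}.
\]
Dividing the factor by $r_n$ gives exactly the bracket $(1-q_n)+r_n^{\,w}q_n$, which is \eqref{eq:g2-crossover-ratio}. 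No absolute values cause trouble because every quantity here is positive. If $\xi_n=0$ the statement is trivial, so I assume $\xi_n\neq0$.

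For the prefix bound \eqref{eq:g2-finite-tax}, I multiply the one-step ratios over $n=0,\dots,k-1$. The $\mu$ factors telescope to $\mu_k/\mu_0$. Each bracket is a convex combination of $1$ and $r_n^{\,w}$, with weights $1-q_n$ and $q_n$ in $[0,1]$. Because $0<r_n<1$ and $w\ge0$, we have $r_n^{\,w}\le1$, so each bracket is at most $1$; this gives the first inequality. For the second, $q\in(0,1]$ gives $1\le\mu(\sigma)\le1+w$, so $\mu_k/\mu_0\le(1+w)/1$.

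The final sentence of the statement then follows by contrast with Theorem~\ref{thm:gbarrier}(b). There the residual grows like $(\varepsilon\sqrt b/\smin)^{\rho}$, whereas here the bound $1+w$ does not depend on $\smin$ or on the mesh. The residual ratio can still approach $1+w$ when $\mu$ rises from about $1$ to about $1+w$ across $\sigma^2\approx b$ in a few coarse steps, and that is the finite crossover tax.

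I do not expect a real obstacle. The only step needing care is identifying $\etaw=\mu\,\xi/\sigma$, so that the $\mu_{n+1}/\mu_n$ prefactor appears and the $1/r_n$ correctly cancels one power of $r_n$ in $G_{\rm fit}$.
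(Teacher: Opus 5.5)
Your proposal is correct and follows the paper's proof step for step. You specialize the decomposition to $A\equiv1$, $B=1-q$, use $\eta_w=\mu\,\xi/\sigma$ to get the one-step ratio $(\mu_{n+1}/\mu_n)(G_{\rm fit}/r_n)$, bound the bracket by $1$, and telescope using $1\le\mu\le1+w$; your extra remark, that one coarse step straddling $\sigma^2\approx b$ can nearly reach the $1+w$ bound, is a correct gloss on the ``finite tax,'' which the paper states without proof.
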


\newcommand{\ProofGTwoTax}{%
For $a=0$, $A\equiv1$ and $q=1-B$.  The decomposition
\eqref{eq:gfit-decomp} gives the coordinate factor
\[
  G_{\rm fit}
  = r_nB_n+r_n^{1+w}(1-B_n)
  = r_n\bigl[(1-q_n)+r_n^{\,w}q_n\bigr].
\]
Since $\etaw=\mu(\sigma)\xi/\sigma$, the residual ratio is
$(\mu_{n+1}/\mu_n)(G_{\rm fit}/r_n)$, which is
\eqref{eq:g2-crossover-ratio}.  The bracket is at most $1$ because
$0\le q_n\le1$ and $0<r_n^w\le1$, so products telescope:
\[
  \prod_{n=0}^{k-1}
  \frac{|\etaw(\xi_{n+1},\sigma_{n+1})|}
       {|\etaw(\xi_n,\sigma_n)|}
  \le
  \prod_{n=0}^{k-1}\frac{\mu_{n+1}}{\mu_n}
  =
  \frac{\mu_k}{\mu_0}
  \le 1+w .
\]
}
\ifarxiv\begin{proof}\ProofGTwoTax\end{proof}\fi

Bounded amplification in fact upgrades to accuracy: on this crossover the
accumulated log-defect (uniform mesh, $h\le1$) obeys
$0\le\log\Theta_K\le\tfrac{1+h}{4}\,w(w+2)\,h$, so the
fitted step is \emph{first-order uniformly accurate} against the exact guided
flow's own pushforward, with a $W_2$ bound whose constant is independent of $b$,
$K$, and $\smin$ (the $O(w^2)$ factor is not uniform in $w$).  We defer the
formal statement and proof to Theorem~\ref{thm:cross-ap} in
Appendix~\ref{app:proofs}.

\newcommand{\ProofGCrossAP}{%
On $a=0$ the exact one-step factor (Lemma~\ref{lem:gfactor}) is
$\Phi_n=r^{1+w}\bigl((1+u_n)/(1+r^2u_n)\bigr)^{w/2}$, with $u_n=\sigma_n^2/b$,
$r=\eexp^{-h}$; the fitted factor (Proposition~\ref{prop:gfit-sign}, $A=1$) is
$G_n=\eexp^{-h}(u_n+\eexp^{-wh})/(1+u_n)$.  Writing $y_n=\log u_n$,
$\delta(h,y):=\log(G_n/\Phi_n)
= wh+\log(\eexp^{y}+\eexp^{-wh})-\bigl(1+\tfrac w2\bigr)\log(1+\eexp^{y})
+\tfrac w2\log(1+\eexp^{y-2h})$.
Then $\delta(0,y)=\partial_h\delta(0,y)=0$ (first-order consistency,
Proposition~\ref{prop:gconsist}), and with $F(y)=\eexp^{y}/(1+\eexp^{y})^2$
one has $\partial_h^2\delta=w^2F(y+wh)+2wF(y-2h)\ge0$; hence $\delta\ge0$, so
$\Theta_K=\prod_n\eexp^{\delta(h,y_n)}\ge1$, and Taylor's formula gives
$\delta(h,y_n)=\int_0^h(h-s)\bigl[w^2F(y_n+ws)+2wF(y_n-2s)\bigr]\,ds$.
On the uniform mesh $y_{n+1}=y_n-2h$; since $\int F=1$ and
$\mathrm{TV}(F)=\tfrac12$, the rectangle bound gives $2h\sum_n F(y_n+c)\le1+h$
for every shift $c$, so
$\log\Theta_K\le\int_0^h(h-s)\tfrac{1+h}{2h}(w^2+2w)\,ds=\tfrac{1+h}{4}w(w+2)h$.
Synchronous coupling with $|\xi_K^{\rm ex}|\le|\xi_0|$ (the exact
discriminative factor contracts) gives the $W_2$ bound.
}

Extending this crossover accuracy to the tangential layers and learned
geometry needs the spectrum-aware Gaussian factor \eqref{eq:exact-factor}
and is future work.

\begin{remark}[the terminal side of the guidance interval]\label{rem:interval}
Limited-interval guidance \citep{kynkaanniemi2024interval} disables
guidance outside a tuned band $(\sigma_{\rm lo}, \sigma_{\rm hi})$ and
leaves open whether the band can be derived rather than tuned.
Theorem~\ref{thm:gbarrier} supplies the terminal side without free
parameters: on schedules whose local step $h_n$ grows toward low $\sigma$
(e.g.\ Karras power meshes; on a uniform $\lambda$-mesh $h_n$ is constant and
there is no distinct crossing), the level at which $h_n$ crosses $h_\flat(w)
= \log(1+1/w)$ predicts $\sigma_{\rm lo}$: below it, frozen-field guidance at weight $w$ is
unaffordable and the interval or the fitted step must take over. The \emph{high}-noise cutoff
$\sigma_{\rm hi}$ is not explained by the barrier: there $\|\Du - \Dc\|$ is
small against the model's own error, a signal-level criterion outside this
paper's ledger.
\end{remark}

\newcommand{\ReverseFull}{%
If a reversed singular direction has $b = 0 < a$---the marginal collapsed
where the class has not---then the exponent \eqref{eq:mu} tends to $-w$,
negative for every $w > 0$: the guided flow itself expands the coordinate
as $\sigma \to 0$, so no discretization can be uniformly stable because the
continuous target is not.  More generally, reverse ordering with
$0 < b < a$ can create finite-$\sigma$ expansion windows, but only the
reversed singular case creates this terminal instability.  This is a
property of the guidance target, not of the solver; the class-subset
hypothesis excludes it by assumption, and autoguidance-style constructions
aim to enforce the same directionality.
}
\ifarxiv
\begin{remark}[the reverse ordering]\label{rem:reverse}
\ReverseFull
\end{remark}
\else
\begin{remark}[the reverse ordering]\label{rem:reverse}
If a direction has $b = 0 < a$, the exponent \eqref{eq:mu} tends to $-w$:
the \emph{continuous} guided flow expands as $\sigma \to 0$, an instability
of the guidance target that no solver can repair; the class-subset
hypothesis excludes it by assumption
(Appendix~\ref{app:reverse}).
\end{remark}
\fi

\paragraph{Certificates.}
The certificate is removal of the $\smin$-divergent pure-layer mechanism
(Proposition~\ref{prop:g2-crossover-tax}) plus first-order accuracy on the
discriminative crossover (Theorem~\ref{thm:cross-ap}), not global residual
nonexpansion through every finite-$\sigma$ crossover.

%% file: guided_empirical_certificates.tex
%

\section{Empirical diagnostics on learned checkpoints}
\label{sec:checkpoint-certs}

\paragraph{Synthetic crossover.}
Our primary residual and image-quality comparisons fix their schemes, cells, and
metrics before each run; the feature-space, latent-diffusion, and dense-reference
results are exploratory diagnostics.  The single-coordinate Gaussian diagnostic isolates the theorem's
singular mechanism: on uniform log-$\sigma$ meshes with $a=0<b$, CFG
develops the coarse-mesh branch of Theorem~\ref{thm:gbarrier} while the
fitted coefficient stays below Proposition~\ref{prop:g2-crossover-tax}'s
finite-tax bound (Figure~\ref{fig:crossover}).  In the default $w$--$N$ sweep,
worst-case amplification reaches $10^3$--$10^9$ for CFG but stays $\le4.66$ for
the fitted step, below each $1+w$ bound---a theorem figure, not a
learned-checkpoint quality metric.

\begin{figure}[t]
\centering
\includegraphics[width=0.43\linewidth]{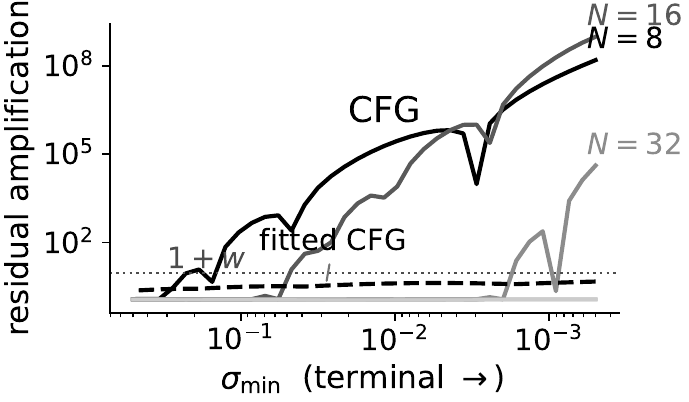}
\caption{Synthetic crossover at $w=8$: residual amplification as
$\smin\to0$.  CFG develops the $\smin$-divergent branch of
Theorem~\ref{thm:gbarrier}(b) on coarse meshes ($N\in\{8,16,32\}$), stabilizing
only once $h<h_\sharp$---the flat gray curve ($N{=}64$) coincides with the exact
flow.  The fitted step (dashed) stays below the $1+w$ tax on every mesh.}
\label{fig:crossover}
\end{figure}

\paragraph{Real-checkpoint residual diagnostics.}
We then test the coefficient as a drop-in replacement on NVIDIA \textsc{edm}
CIFAR-10 VP checkpoints \citep{karras2022edm} (the `VP' label is network
preconditioning only; sampling uses the $\sigma$-parameterization): $\Dc$ is the
class-conditional checkpoint, $\Du$ the separately trained unconditional one, and
both schemes use the same Karras mesh, seeds, labels, and NFE.  The grid is $w\in\{4,6.5,8\}$ and
$N\in\{8,16,32\}$, with two 64-seed blocks (class $0$, seeds from $1000$;
class $1$, from $2000$).
Table~\ref{tab:real-checkpoint-gate} reports paired fitted/CFG ratios for
the amplification median and tail, and paired clipping differences; lower is
better.

\begin{table}[t]
\centering
\caption{Real-checkpoint residual diagnostics.  amp med and amp p95 (median and 95th-percentile
residual amplification) are fitted-CFG/CFG ratios ($<1$ means the repair shrinks
it); clip $\Delta$ is fitted CFG minus CFG for the median fraction of
final-denoised pixels outside [-1,1].}
\label{tab:real-checkpoint-gate}
{\scriptsize
\IfFileExists{manuscript/real_gate_artifacts/real_checkpoint_gate_table.tex}{%
  \input{manuscript/real_gate_artifacts/real_checkpoint_gate_table.tex}}{%
\input{real_gate_artifacts/real_checkpoint_gate_table.tex}}}
\end{table}

In both blocks, for every tested
$N\in\{8,16,32\}$ at $w=6.5$ and $w=8$, fitted CFG improves the amp median,
amp p95, and clipping median.  The hardest cell is the
clearest: at $w=8,N=8$, the class-$0$ amp median/p95 falls from
$4.57/13.35$ under CFG to $1.41/1.71$ under fitted CFG, while the clipping median drops from
$0.379$ to $0$; the class-$1$ replicate matches (amp p95 ratio $0.116$,
clipping delta $-0.23$).  At $w=4$, where the
theory predicts a much weaker tax, the median amplification is mixed, but
the amp tail and saturation proxy still improve in most cells.  We use this
as evidence for high-guidance stabilization, not uniform superiority at
weak guidance.

\paragraph{Image-quality evaluation.}
We next ran two 5000-image balanced-class evaluations on the hard cell $w=8,N=8$,
plus a 50k replication, each including a terminal-interval baseline (vanilla CFG
correction off when $h>h_\flat$, after \citet{kynkaanniemi2024interval}; on this
coarse mesh every step crosses $h_\flat$, so it reduces to unguided conditional
DDIM).  We score with FID and KID (feature-space quality metrics; lower is
better).
Table~\ref{tab:image-quality} collects the results.  Fitted CFG has the best
FID in all three blocks at zero extra NFE, and the certificates replicate
(amp p95 falls from $13{+}$ to ${\approx}1.6$, clipping p95 from ${\approx}0.57$
to $0$).  The metrics disagree, however: KID is lowest for CFG in every block.
We therefore read this as support for high-guidance stabilization and
FID improvement, not an unqualified image-quality win; feature-manifold and
sharpness diagnostics are consistent with CFG's oversaturation contributing to
the split (Figure~\ref{fig:hardcell-samples}), and a DINOv2
backbone-swapped audit gives the same fitted-favoring ordering, so the split is
not Inception-specific (Appendices~\ref{app:split} and~\ref{app:dino}).

\begin{table}[t]
\centering
\caption{Image-quality evaluation at the hard cell $w{=}8,N{=}8$: two 5k seed blocks
(A, B) and a 50k replication, for CFG, the fitted repair, and the
terminal-interval baseline.  FID/KID are feature-space quality metrics; amp p95
and clip p95 are the 95th-percentile residual amplification and final-denoise
clipping fraction (per scheme, not a ratio); target acc is classifier
target-label accuracy (\%).  \textbf{Bold} marks the per-block best of the two
feature metrics: fitted CFG wins FID everywhere, CFG wins KID everywhere.}
\label{tab:image-quality}
{\small
\begin{tabular}{llccccc}
\toprule
 & scheme & FID $\downarrow$ & KID $\downarrow$ & amp p95 $\downarrow$ & clip p95 $\downarrow$ & target acc $\uparrow$ \\
\midrule
\multirow{3}{*}{5k (A)} & CFG      & $32.44$ & $\mathbf{0.0110}$ & $13.32$ & $0.570$ & $94.96$ \\
                        & fitted   & $\mathbf{25.22}$ & $0.0181$ & $1.66$ & $0$ & $94.66$ \\
                        & interval & $26.46$ & $0.0224$ & $1.63$ & $0$ & $89.24$ \\
\midrule
\multirow{3}{*}{5k (B)} & CFG      & $31.98$ & $\mathbf{0.0107}$ & $13.20$ & $0.572$ & $94.88$ \\
                        & fitted   & $\mathbf{24.89}$ & $0.0173$ & $1.64$ & $0$ & $94.34$ \\
                        & interval & $26.04$ & $0.0211$ & $1.62$ & $0$ & $90.00$ \\
\midrule
\multirow{3}{*}{50k}    & CFG      & $27.86$ & $\mathbf{0.0108}$ & $13.34$ & $0.569$ & $95.14$ \\
                        & fitted   & $\mathbf{20.88}$ & $0.0177$ & $1.65$ & $0$ & $94.83$ \\
                        & interval & $22.08$ & $0.0221$ & $1.63$ & $0$ & $88.95$ \\
\bottomrule
\end{tabular}}
\end{table}

\begin{figure}[t]
\centering
\includegraphics[width=0.46\linewidth]{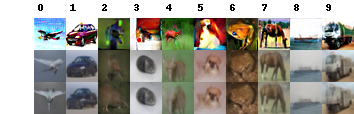}
\caption{Uncurated hard-cell samples ($w{=}8$, $N{=}8$; first ten CIFAR-10
classes, matched seeds).  Rows: CFG, fitted CFG, terminal-interval.  CFG's row
is visibly more saturated and higher-contrast; the repair and interval are
lower-contrast at zero extra NFE.  Qualitative only---KID still favors CFG.}
\label{fig:hardcell-samples}
\end{figure}

Finally, the target-accuracy column of Table~\ref{tab:image-quality} is a
classifier-conditionality diagnostic (classifier at $94.37\%$ on the CIFAR-10 test
set): in every block fitted CFG stays within $0.6$pp of CFG yet ${\sim}5$pp
above interval, keeping CFG's conditional pull without its clipping pathology.

\paragraph{Robustness grid.}
The 50k replication (Table~\ref{tab:image-quality}) reduces the 5k-sampling-artifact
concern for the FID ordering, and the KID split persists at scale.  A nine-cell grid
($w\in\{4,6.5,8\}$, $N\in\{8,16,32\}$, 5k images each) shows it is not
cell-specific: fitted CFG improves FID in $9/9$ cells, clipping p95 in $9/9$,
and amp p95 in $8/9$ (exception $w{=}4,N{=}32$, both near $1$).  The
terminal-interval baseline wins FID at $N\in\{16,32\}$ and KID at $N=32$ but
loses conditionality everywhere ($3.24$--$5.72$pp).

\paragraph{Interval-baseline separation.}
Is the fitted step just guidance shutdown?  We compared it against a
\emph{tuned} conditionality-preserving limited interval (selected per cell) and
the parameter-free terminal interval (Appendix~\ref{app:interval}).  In the
tested cells, the tuned interval preserves target accuracy but does not beat the
fitted step on FID; the terminal interval can win FID (e.g.\ $12.63$ vs $17.12$
at $w{=}8,N{=}16$) but drops target accuracy by $5$--$6$pp.  So the fitted step gives interval-like
residual stabilization while keeping the guided update active---it is not
explained away by turning guidance off.

\paragraph{All-class audit.}
Extending the residual diagnostic from two classes to all ten CIFAR-10 classes on the
high-guidance cells (60 class--cell pairs), clipping/saturation is non-increased
in $60/60$ pairs and amp p95 improves in $54/60$: global saturation robustness
and near-global residual-tail improvement (Appendix~\ref{app:allclass}).

\paragraph{Latent-diffusion smoke.}
The mechanism is not specific to pixel-space CIFAR.  On Stable Diffusion~1.5
DDIM at high guidance, the same coefficient sharply reduces pixel saturation
(p95 $0.134\to0.009$ at guidance $12$, over 256 images) while preserving CLIP
image--text alignment ($0.320\to0.314$); terminal guidance shutdown reaches
lower saturation but collapses CLIP to $0.271$ (Appendix~\ref{app:ldm}).  This
is a cross-domain smoke test, not a Stable Diffusion benchmark.

%% file: real_gate_artifacts/real_checkpoint_gate_table.tex
\begin{tabular}{ccrrrrrr}
\toprule
$w$ & $N$ & \multicolumn{3}{c}{class0} & \multicolumn{3}{c}{class1} \\
 &  & amp med $\downarrow$ & amp p95 $\downarrow$ & clip $\Delta\downarrow$ & amp med $\downarrow$ & amp p95 $\downarrow$ & clip $\Delta\downarrow$ \\
\midrule
4.0 & 8 & 0.993 & 0.867 & -0.0236 & 0.997 & 0.985 & -0.0177 \\
4.0 & 16 & 1.001 & 0.878 & -0.0225 & 1.001 & 1.012 & -0.0360 \\
4.0 & 32 & 1.013 & 0.871 & -0.0231 & 1.002 & 1.012 & -0.0529 \\
6.5 & 8 & 0.695 & 0.149 & -0.2889 & 0.666 & 0.132 & -0.1421 \\
6.5 & 16 & 0.821 & 0.534 & -0.2926 & 0.933 & 0.659 & -0.1978 \\
6.5 & 32 & 0.836 & 0.512 & -0.3120 & 0.971 & 0.668 & -0.2231 \\
8.0 & 8 & 0.309 & 0.128 & -0.3786 & 0.203 & 0.116 & -0.2306 \\
8.0 & 16 & 0.668 & 0.132 & -0.3937 & 0.593 & 0.129 & -0.2876 \\
8.0 & 32 & 0.544 & 0.216 & -0.4271 & 0.704 & 0.288 & -0.3853 \\
\bottomrule
\end{tabular}

%% file: sections/045_related_work.tex
\section{Related work}
\label{sec:related}

The CFG-repair family is crowded, and methods differ mainly in their
\emph{mathematical object}: a manifold projection, a guidance-vector rescaling,
an interval schedule, a dynamic weight, or---ours---a terminal fitted-operator
coefficient.  In brief: CFG++ \citep{chung2025cfgpp} constrains the
update to the data manifold; limited-interval guidance
\citep{kynkaanniemi2024interval} switches guidance off outside a $\sigma$
band (Remark~\ref{rem:interval} derives its terminal edge parameter-free);
APG \citep{sadat2025apg} down-weights the update's parallel component;
CFG-Zero$^\star$ \citep{fan2025cfgzero} optimizes the scale and zero-inits
early steps; and C$^2$FG \citep{gao2026c2fg} decays the weight in time.
None derives the terminal-fitted coefficient
$r^{1+w}-r$ from the guided exponent---where the barrier lives.
An orthogonal line analyses what the \emph{continuous} guided law samples---CFG
is a predictor--corrector, not a sampler of its tilted target
\citep{bradley2024cfg,chidambaram2024cfg}---the ``model'' bill complementary to
ours.

%% file: sections/05_discussion.tex
\section{Discussion and limitations}
\label{sec:discussion}

\paragraph{What is proved.}
On the commuting class-subset Gaussian model, the barrier
(Theorem~\ref{thm:gbarrier}) is sharp; the fitted step is consistent,
sign-preserving, terminal-exact, pays only a finite $O(1+w)$ crossover tax
(Prop.~\ref{prop:g2-crossover-tax}), and is first-order accurate on the
crossover (Theorem~\ref{thm:cross-ap})---all about the \emph{discretization}
against the guided flow's own pushforward, AP in $\smin$ but not uniform in
$w$.

\paragraph{What is evidence.}
On learned CIFAR-10 \textsc{edm} checkpoints, fitted CFG is a drop-in,
zero-extra-NFE high-guidance stabilizer: the certificates improve residual
amplification and clipping across every $w\in\{6.5,8\}$ cell; the hard-cell
evaluations (5k twice, 50k once) give it the best FID with target accuracy
preserved, and a DINOv2 backbone-swap the same ordering; the nine-cell grid
gives $9/9$ FID wins over CFG.  It separates from tuned and terminal interval
baselines (which either lose target accuracy or fail to beat it on FID), holds
across all ten classes ($60/60$ saturation non-increase, $54/60$ amp-tail), and
transfers to Stable Diffusion~1.5 DDIM as a saturation-reducing coefficient that
keeps CLIP alignment far better than guidance shutdown
(Section~\ref{sec:checkpoint-certs}, Appendices~\ref{app:interval}--\ref{app:allclass}).

\paragraph{What is neither.}
We do not claim (i) an unqualified image-quality win---KID stays lowest for
CFG at 5k and 50k, a split tied to CFG's oversaturation without proven
causation; (ii) that CIFAR verifies the sharp threshold law---the learned
diagnostics are only \emph{consistent with} the model ordering; or (iii) a 50k
KID win or many-cell large-sample result---the 50k run is one hard cell.  Nor
is the fitted step a uniformly better integrator of the vanilla CFG field on
learned checkpoints: against a $512$-step CFG reference its residual tails shrink
but its final-image $L_2$ does not (Appendix~\ref{app:denseref}), an interval can
win FID in some cells by weakening guidance, and the Stable Diffusion result is a
smoke test, not a benchmark.  The class-subset hypothesis is itself a boundary:
where the class is \emph{wider} than the marginal, the continuous guided flow is
unstable (exponent $-w$), which no solver can repair (Remark~\ref{rem:reverse}).

\paragraph{Scope and outlook.}
Every certificate uses only sampler-computed quantities from two separate
checkpoints, so shared-head correlations cannot inflate them.
`Asymptotic-preserving' is used in the calibration-model sense: on learned
checkpoints we audit AP-relevant residuals rather than certify AP.  Next:
shared-dropout CFG, broader latent-diffusion benchmarks, and a whole-model
accuracy theory.

%% file: sections/06_reproducibility.tex
\section*{Reproducibility Statement}
All theoretical results are stated with explicit constants on the model of
Hypothesis~\ref{hyp:commuting}, and complete proofs appear in
Appendix~\ref{app:proofs}.  Every empirical diagnostic in
Section~\ref{sec:checkpoint-certs} is produced by a self-contained script
operating on the public NVIDIA \textsc{edm} CIFAR-10 VP checkpoints with
stated seeds, meshes, sample counts, schemes, and metrics; the
synthetic-crossover and checkpoint-certificate scripts additionally ship CPU
\texttt{--self-test} modes, and the synthetic crossover figure requires no
checkpoints.  The supplementary package contains the scripts, the exact
commands, the run manifests, and all summary artifacts (CSV/JSON) behind every
number reported in the paper, including the 50k and nine-cell grid diagnostics.

%% file: sections/99_appendix.tex
\appendix

\section{Deferred proofs}
\label{app:proofs}

\begin{proof}[Proof of Proposition~\ref{prop:glayers}]
\ProofGLayers
\end{proof}

\begin{proof}[Proof of Lemma~\ref{lem:gfactor}]
\ProofGFactor
\end{proof}

\begin{proof}[Proof of Theorem~\ref{thm:gbarrier}]
\ProofGBarrier
\end{proof}

\begin{proof}[Proof of Proposition~\ref{prop:gconsist}]
\ProofGConsist
\end{proof}

\begin{proof}[Proof of Proposition~\ref{prop:gfit-sign}]
\ProofGFitSign
\end{proof}

\begin{proof}[Proof of Lemma~\ref{lem:g2-unique}]
\ProofGTwoUnique
\end{proof}

\begin{proof}[Proof of Proposition~\ref{prop:g2-crossover-tax}]
\ProofGTwoTax
\end{proof}

\begin{theorem}[first-order guided AP on the discriminative crossover]\label{thm:cross-ap}
On a discriminative Gaussian coordinate ($a=0<b$), for $w\ge0$ and a uniform
$\lambda$-mesh of step $0<h\le1$, the fitted step \eqref{eq:g2-denoiser} and
the exact guided flow satisfy $\xi_K^{\rm fit}=\Theta_K\,\xi_K^{\rm ex}$ with
$0\le\log\Theta_K\le\tfrac{1+h}{4}\,w(w+2)\,h$, hence
$W_2\bigl(\mathrm{Law}(\xi_K^{\rm fit}),\mathrm{Law}(\xi_K^{\rm ex})\bigr)
\le\bigl(\eexp^{w(w+2)h/2}-1\bigr)(\E|\xi_0|^2)^{1/2}$.  The multiplier bound and
its prefactor are independent of $b$, $K$, and $\smin$ (the $W_2$ scale carries
the initial second moment $(\E|\xi_0|^2)^{1/2}$): on this crossover the fitted
step is first-order uniformly accurate against the exact flow's own pushforward,
though the $O(w^2)$ prefactor is not uniform in $w$.
\end{theorem}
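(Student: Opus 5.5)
The plan is to work entirely with per-step log-defects. On $a=0$, Lemma~\ref{lem:gfactor} gives the exact factor $\Phi_n=r^{1+w}\bigl((1+u_n)/(1+r^2u_n)\bigr)^{w/2}$, where $u_n=\sigma_n^2/b$ and $r=\eexp^{-h}$. Proposition~\ref{prop:gfit-sign} with $A=1$, $B=u_n/(1+u_n)$ gives the fitted factor $G_n=\eexp^{-h}(u_n+\eexp^{-wh})/(1+u_n)$. Both schemes are linear on this coordinate, so $\xi_K^{\rm fit}=\Theta_K\xi_K^{\rm ex}$ with $\Theta_K=\prod_n G_n/\Phi_n$. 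Setting $y_n=\log u_n$ and $\delta(h,y)=\log(G/\Phi)$, we get an explicit sum of logs:
\[
\delta(h,y)=wh+\log(\eexp^{y}+\eexp^{-wh})-\bigl(1+\tfrac w2\bigr)\log(1+\eexp^{y})+\tfrac w2\log(1+\eexp^{y-2h}).
\]
The whole theorem then reduces to bounding $\sum_n\delta(h,y_n)$. On the uniform mesh this sum runs along the lattice $y_{n+1}=y_n-2h$.

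The first step is to check $\delta(0,y)=0$ and $\partial_h\delta(0,y)=0$; the latter is just first-order consistency, Proposition~\ref{prop:gconsist}. Next we compute $\partial_h^2\delta$. With the logistic density $F(y)=\eexp^{y}/(1+\eexp^{y})^2$, the term $\log(\eexp^y+\eexp^{-wh})$ contributes $w^2F(y+wh)$ and the last term contributes $2wF(y-2h)$. Hence $\partial_h^2\delta=w^2F(y+wh)+2wF(y-2h)\ge0$. Taylor's formula with integral remainder then gives
\[
\delta(h,y)=\int_0^h(h-s)\bigl[w^2F(y+ws)+2wF(y-2s)\bigr]\,ds\ \ge 0,
\]
which yields $\log\Theta_K\ge0$ at once.

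The main obstacle is the upper bound: it must be uniform in $K$, $b$ and $\smin$, so we cannot bound each step by $O(h^2)$ and multiply by $K$, since $K\sim\Lambda/h$ is unbounded as $\smin\to0$. The saving fact is that $F$ is integrable and concentrated near the crossover. We swap the sum over $n$ with the $s$-integral. For each fixed shift $c$, we need a Riemann-sum bound
\[
2h\sum_n F(y_n+c)\le\int F+2h\,\sup F\le 1+h.
\]
This uses $\int F=1$ together with $F$ being unimodal, so the rectangle-sum error is at most the mesh width times total variation, $\mathrm{TV}(F)=2\cdot\tfrac14=\tfrac12$. It holds uniformly in the lattice offset, which is exactly what removes $b$ and $\smin$. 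Inserting this bound gives
\[
\log\Theta_K\le\int_0^h(h-s)\,\tfrac{1+h}{2h}(w^2+2w)\,ds=\tfrac{1+h}{4}w(w+2)h.
\]

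Finally, for $W_2$ we use the synchronous coupling, running both schemes from the same $\xi_0$. This gives $|\xi_K^{\rm fit}-\xi_K^{\rm ex}|=(\Theta_K-1)|\xi_K^{\rm ex}|$. Since each $\Phi_n\le1$ (the exact discriminative flow contracts, as $\mu_w>0$), we have $|\xi_K^{\rm ex}|\le|\xi_0|$. For $h\le1$ we have $(1+h)/4\le1/2$, so $\Theta_K-1\le\eexp^{w(w+2)h/2}-1$. Taking $L^2$ norms gives the stated bound.
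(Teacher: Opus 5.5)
Your proposal is correct and follows the paper's proof essentially step for step. It uses the same log-defect $\delta(h,y)$, the same second derivative $w^2F(y+wh)+2wF(y-2h)$ with Taylor's integral remainder, the same offset-uniform rectangle bound $2h\sum_nF(y_n+c)\le1+h$ from $\int F=1$ and $\mathrm{TV}(F)=\tfrac12$, and the same synchronous coupling with $|\xi_K^{\rm ex}|\le|\xi_0|$ (your unimodal estimate $\int F+2h\sup F$ in fact gives the slightly sharper $1+h/2$, which is harmless).
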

\begin{proof}
\ProofGCrossAP
\end{proof}

\section{The reverse ordering}
\label{app:reverse}

This appendix expands Remark~\ref{rem:reverse}.

\ReverseFull

\section{Metric-split diagnostics}
\label{app:split}

To interpret the FID/KID split at the hard cell, we computed post-hoc
feature-manifold and sharpness diagnostics on the same images.  Fitted CFG
improves Inception precision/recall over CFG in both 5k blocks (e.g.\
$0.799/0.474$ versus $0.704/0.360$); interval has higher recall but lower
precision.  The pixel-saturation median is $0.16$ for CFG and $0$ for fitted
CFG and interval, and the median Laplacian energy drops by an order of
magnitude---consistent with the fitted step removing oversaturated
high-frequency artifacts (we use the Laplacian only as an oversaturation proxy,
not as evidence of sharpness).  The DINOv2 audit below is a backbone-swapped
control on the same split.

\section{DINOv2 feature audit}
\label{app:dino}

To test whether the FID/KID split is Inception-specific, we recompute
feature-space distances with a DINOv2 backbone (\texttt{dinov2\_vits14},
CLS-token features; images bilinearly resized to $224$) against $50$k
CIFAR-10 training images.  We report the Fr\'echet distance FD-DINOv2 and an
unbiased RBF-kernel MMD (median-heuristic bandwidth, $100$ subsets of size
$1000$).  On the two 5k hard-cell blocks, FD-DINOv2 for CFG/fitted/interval is
$469.73/230.00/257.88$ and $469.10/227.86/254.25$, and the DINO-MMD is
$0.0258/0.0126/0.0168$ and $0.0257/0.0125/0.0164$; both favor fitted CFG in
both blocks.  FD-DINOv2 is the clean backbone-only control; the MMD changes
both backbone and kernel relative to Inception KID, so it is corroborating,
not a direct KID analogue.

\section{Tuned limited-interval baselines}
\label{app:interval}

The Section~\ref{sec:checkpoint-certs} interval baseline is the parameter-free
terminal cutoff, which on the hard mesh reduces to unguided conditional DDIM.
To test whether a \emph{tuned} interval that preserves conditionality can match
the fitted step, we selected per cell the conditionality-viable guidance
interval $[\sigma_{\rm lo},\sigma_{\rm hi}]$ from a nine-candidate grid (the one
whose target accuracy stays closest to CFG), then scored it on a fresh 5k block.
Table~\ref{tab:interval} reports CFG, the fitted step, the tuned interval, and
the terminal interval.  In the tested cells, a conditionality-preserving tuned
interval does not beat the fitted step on FID; the terminal interval can win FID
in a cell
($w{=}8,N{=}16$: $12.63$ vs $17.12$) but pays a clear target-accuracy cost.  The
fitted step gives interval-like residual stabilization while keeping the guided
update active.  (Target accuracy is a classifier proxy; the high fitted value at
$N{=}16$ should not be read as fuller conditional fidelity, as it may partly
reflect reduced diversity.)

\begin{table}[t]
\centering
\caption{Tuned vs.\ terminal limited-interval baselines (5k images per cell).
``tuned int.'' is the conditionality-selected interval; ``terminal int.'' is the
parameter-free terminal cutoff of Section~\ref{sec:checkpoint-certs}.  amp p95 is
the 95th-percentile residual amplification, clip p95 the final-denoise clipping
fraction, target acc the classifier-proxy target-label accuracy (\%).}
\label{tab:interval}
{\small
\begin{tabular}{llccccc}
\toprule
cell & scheme & FID $\downarrow$ & KID $\downarrow$ & amp p95 $\downarrow$ & clip p95 $\downarrow$ & target acc $\uparrow$ \\
\midrule
\multirow{4}{*}{$w{=}8,N{=}8$}
 & CFG           & $32.21$ & $0.0105$ & $13.31$ & $0.575$ & $95.5$ \\
 & fitted        & $25.70$ & $0.0180$ & $1.66$  & $0$     & $94.9$ \\
 & tuned int.    & $34.25$ & $0.0205$ & $1.71$  & $0.018$ & $95.3$ \\
 & terminal int. & $26.68$ & $0.0225$ & $1.63$  & $0$     & $89.1$ \\
\midrule
\multirow{4}{*}{$w{=}8,N{=}16$}
 & CFG           & $30.83$ & $0.0117$ & $7.30$  & $0.664$ & $93.8$ \\
 & fitted        & $17.12$ & $0.0070$ & $1.66$  & $0.006$ & $99.3$ \\
 & tuned int.    & $30.18$ & $0.0120$ & $1.75$  & $0.257$ & $98.8$ \\
 & terminal int. & $12.63$ & $0.0072$ & $1.66$  & $0.000$ & $94.8$ \\
\midrule
\multirow{4}{*}{$w{=}6.5,N{=}8$}
 & CFG           & $25.36$ & $0.0057$ & $8.08$  & $0.394$ & $98.4$ \\
 & fitted        & $25.36$ & $0.0175$ & $1.40$  & $0$     & $94.8$ \\
 & tuned int.    & $32.60$ & $0.0208$ & $1.43$  & $0.002$ & $94.7$ \\
 & terminal int. & $26.57$ & $0.0216$ & $1.38$  & $0$     & $89.8$ \\
\bottomrule
\end{tabular}}
\end{table}
\FloatBarrier

\section{Latent-diffusion transfer smoke (Stable Diffusion 1.5)}
\label{app:ldm}

As a cross-domain check that the coefficient is not specific to pixel-space
CIFAR \textsc{edm}, we ran a small Stable Diffusion~1.5 DDIM smoke test at high
guidance, swapping only the terminal guidance coefficient.  We report pixel
saturation (95th percentile of the clipped-pixel fraction) and CLIP
image--text alignment; the interval baseline is terminal guidance shutdown.
Table~\ref{tab:ldm} shows the fitted step sharply reduces saturation while
preserving CLIP alignment far better than shutdown---the same saturation-repair
pattern as on CIFAR.  This is a smoke/deepen test, not a Stable Diffusion
benchmark.

\emph{Setup.}  We use \texttt{runwayml/stable-diffusion-v1-5} at
$512\times512$ with deterministic DDIM ($\eta=0$, $\epsilon$-prediction) and
float16, over a fixed list of everyday-scene captions (repeated to $256$ and
$128$ prompts) with no negative prompt and seeds matched across schemes.  The
fitted step swaps only the terminal guidance coefficient; the interval baseline
disables the guidance correction on the terminal steps below the same
$h_\flat$-derived cutoff.  Saturation p95 is the 95th percentile over images of
the fraction of final RGB pixels at the valid-range extremes; CLIP alignment
uses \texttt{openai/clip-vit-base-patch32}.

\begin{table}[ht]
\centering
\caption{Stable Diffusion~1.5 DDIM smoke test.  Saturation p95 (lower is less
oversaturated); CLIP mean (higher is better text alignment).  ``interval'' is
terminal guidance shutdown.}
\label{tab:ldm}
{\small
\begin{tabular}{llcc}
\toprule
setting & scheme & saturation p95 $\downarrow$ & CLIP mean $\uparrow$ \\
\midrule
\multirow{3}{*}{$g{=}12$, $N{=}12$ (256 img)}
 & CFG      & $0.134$ & $0.320$ \\
 & fitted   & $0.009$ & $0.314$ \\
 & interval & $0.001$ & $0.271$ \\
\midrule
\multirow{3}{*}{$g{=}7.5$, $N{=}20$ (128 img)}
 & CFG      & $0.098$ & $0.321$ \\
 & fitted   & $0.028$ & $0.319$ \\
 & interval & $0.005$ & $0.278$ \\
\bottomrule
\end{tabular}}
\end{table}
\FloatBarrier

\section{All-class residual audit}
\label{app:allclass}

The real-checkpoint residual diagnostic (Table~\ref{tab:real-checkpoint-gate})
uses class~0 and class~1 blocks.  To rule out class cherry-picking, we extended
it to all ten CIFAR-10 classes on the high-guidance cells ($w\in\{6.5,8\}$,
$N\in\{8,16,32\}$; 60 class--cell pairs, all fixed before the run).
Clipping/saturation is non-increased in $60/60$ pairs, and amp p95 improves in
$54/60$; the other six are modestly worse on the amp tail (ratios $1.02$--$1.25$,
largest class~4 at $w{=}6.5,N{=}8$), with clipping still non-increased there.  We
describe this as \emph{global saturation robustness} and \emph{near-global
residual-tail improvement}, and leave the raw counts for the reader to weigh.
\FloatBarrier

\section{Dense guided-flow reference}
\label{app:denseref}

Theorem~\ref{thm:cross-ap} is an accuracy statement against the exact guided
flow's own pushforward on the calibration model.  On learned checkpoints this
does \emph{not} extend to final-image accuracy: measured against a $512$-step
vanilla-CFG reference trajectory, the fitted step is \emph{not} uniformly closer
in final denoised-image $L_2$ (Table~\ref{tab:denseref}, ratios $>1$), even
though its residual and amp tails are far smaller (ratios $\ll1$).  So the
learned-checkpoint evidence supports terminal residual/saturation repair, not a
claim that the fitted step is a uniformly better coarse integrator of the vanilla
CFG field.

\begin{table}[ht]
\centering
\caption{Dense-reference diagnostic: fitted-CFG/CFG ratios against a $512$-step
vanilla-CFG reference ($512$ samples per cell).  Final-image rel-$L_2$ ratios
exceed $1$ (not closer to the dense CFG endpoint); residual rel-$L_2$ and amp p95
ratios are far below $1$.}
\label{tab:denseref}
{\small
\begin{tabular}{lcccc}
\toprule
cell & image rel-$L_2$ med & image rel-$L_2$ p95 & residual rel-$L_2$ p95 & amp p95 \\
\midrule
$w{=}8,N{=}8$  & $1.143$ & $1.109$ & $0.329$ & $0.124$ \\
$w{=}8,N{=}16$ & $1.332$ & $1.170$ & $0.884$ & $0.233$ \\
\bottomrule
\end{tabular}}
\end{table}